%% file: main.tex
\PassOptionsToPackage{table}{xcolor} % 在 acl2026.sty 加载 xcolor 之前传递 table 选项
\documentclass[11pt]{article}
\usepackage[final]{acl2026}
\usepackage{times}
\usepackage{latexsym}
\usepackage[T1]{fontenc}
\usepackage[utf8]{inputenc}

\usepackage{microtype}      % 优化排版微调
\usepackage{booktabs}       % 专业表格线
\usepackage{graphicx}       % 图形支持
\usepackage{subcaption}     % 子图支持
\usepackage{titletoc}       % 目录格式
\usepackage{enumitem}       % 列表定制
\usepackage{float}          % 浮动体控制 (H)
\usepackage{multirow}       % 表格多行合并
\usepackage{pifont}         % 特殊符号 (如 dingbat)
\usepackage{xspace}         % 智能空格处理

\usepackage{amsmath}
\usepackage{amssymb}
\usepackage{mathtools}
\usepackage{amsthm}

\usepackage[listings,skins,breakable]{tcolorbox} % 彩色文本框

\usepackage{algorithm}
\usepackage{algpseudocode}

\usepackage{hyperref} 
\usepackage[textsize=tiny]{todonotes}           % 开发时使用

\usepackage[capitalize,noabbrev]{cleveref}

\usepackage{listings}
\lstdefinestyle{python}{
    language=Python,                 % 语言设置为 Python
    basicstyle=\ttfamily\footnotesize, % 字体样式 (等宽, 小号)
    keywordstyle=\color{blue}\bfseries, % 关键字样式 (蓝色加粗)
    commentstyle=\color{gray},       % 注释样式 (灰色)
    stringstyle=\color{red},         % 字符串样式 (红色)
    breaklines=true,                 % 自动换行
    frame=single,                    % 给代码加边框 (可选: none, single, shadow)
    numbers=left,                    % 行号显示在左侧 (可选: none)
    numberstyle=\tiny\color{gray},   % 行号样式
    showstringspaces=false,          % 不显示字符串中的空格标记
    captionpos=b,                    % 标题位置
    backgroundcolor=\color{white},   % 背景色
}
\algnewcommand\algorithmicinput{\textbf{Input:}}
\algnewcommand\Input{\item[\algorithmicinput]}
\algnewcommand\algorithmicoutput{\textbf{Output:}}
\algnewcommand\Output{\item[\algorithmicoutput]}
\newcommand{\algcomment}[1]{\State \textcolor{blue}{// #1}} % 蓝色注释

\theoremstyle{plain}
\newtheorem{theorem}{Theorem}[section]

\theoremstyle{definition}
\newtheorem{definition}[theorem]{Definition}
\newtheorem{assumption}[theorem]{Assumption}
\theoremstyle{remark}

\definecolor{mygray}{gray}{.9}
\definecolor{ggray}{RGB}{127,127,127}
\definecolor{reda}{RGB}{192,0,0}
\definecolor{redb}{RGB}{217,148,143}
\definecolor{myyellow}{RGB}{190,144,0}
\definecolor{myblue}{RGB}{30,90,100}
\definecolor{tabhighlight}{HTML}{e5e5e5}

\newcommand{\red}[1]{\textcolor{reda}{\textbf{#1}}}   % 暂时简化为仅变色，如有 mytextbox 定义请恢复
\newcommand{\blue}[1]{\textcolor{myblue}{\textbf{#1}}}
\newcommand{\green}[1]{\textcolor{green!50!black}{\textbf{#1}}}

\newcommand{\purple}[1]{\textcolor{purple}{\textbf{#1}}}

\newcommand{\ourapproach}{$\text{E}^3$-TIR\xspace}

\definecolor{reda}{RGB}{192,0,0}
\usepackage{fontawesome5} 
\usepackage{seqsplit}

\title{$\text{E}^3$-TIR:  Enhanced Experience Exploitation for  Tool-Integrated Reasoning}

\author{
    Weiyang Guo$^1$
    \quad Zesheng Shi$^1$
    \quad Liye Zhao$^2$
    \quad Jiayuan Ma$^1$\\
    \quad \textbf{Zeen Zhu}$^1$
    \quad \textbf{Junxian He}$^3$
    \quad \textbf{Min Zhang}$^1$
    \quad  \textbf{Jing Li}$^1$\textsuperscript{\texorpdfstring{\faIcon[regular]{envelope}}{}} 
    \\$^{1}$Harbin Institute of Technology, Shenzhen, China 
    \\$^{2}$Huawei Technologies Co., Ltd.  
    \\$^{3}$Hong Kong University of Science and Technology  \\
    \texttt{weiyangguo2024@gmail.com} \quad \texttt{jingli.phd@hotmail.com}  
}

\begin{document}
\maketitle

\begin{abstract}

While Large Language Models (LLMs) have demonstrated significant potential in Tool-Integrated Reasoning (TIR), existing training paradigms face significant limitations: Zero-RL suffers from inefficient exploration and mode degradation due to a lack of prior guidance, while SFT-then-RL is limited by high data costs and capability plateaus caused by low-entropy collapse. To address these challenges, we propose \ourapproach (Enhanced Experience Exploitation), a warm-up paradigm for the early stages of agent training. Specifically, we formulate training as the dynamic integration of three experience types: Expert Prefixes, Expert Guided, and Self-Exploration. By executing diverse branching exploration around expert ``anchors’’ and employing a mix policy optimization mechanism, we effectively mitigate distribution shifts and resolve optimization conflicts arising from shared prefixes. Our method dynamically adapts the model’s knowledge boundaries, effectively balancing exploration diversity with training efficiency. Experimental results demonstrate that \ourapproach achieves a 6\% performance improvement over traditional paradigms on tool-use tasks, while requiring less than 10\% of the synthetic data. Furthermore, in terms of ROI—a comprehensive metric integrating performance, data cost, and training efficiency—we achieve a 1.46$\times$ gain compared to baselines.  Code is available at \url{https://github.com/yuki-younai/E3-TIR}.
\let\thefootnote\relax\footnotetext{\faIcon[regular]{envelope}~Corresponding author.}
\end{abstract}

\section{Introduction}
\label{introduction}

Large Language Models (LLMs) have demonstrated exceptional potential in complex reasoning and tool-assisted decision-making, which has driven the realization of advanced applications such as question answering, planning, and interactive agents~\cite{survey1,survey2}. To overcome the limitations of purely internal reasoning, LLMs are often deployed as Tool-Integrated Reasoning (TIR) agents~\cite{dong2025aepo}. A TIR agent performs reasoning and action through a multi-step, interleaved process that invokes external tools such as code interpreters~\cite{xue2025simpletir} and search engines~\cite{ragen}.

Recently, reinforcement Learning (RL) has shown significant promise in unlocking frontier LLM capabilities~\cite{deepseekmath2024}, methods such as Search-R1~\cite{search-r1} and ToRL~\cite{torl} train directly on the foundation model, utilizing self-exploration to acquire tool-use capabilities, but this compromises training efficiency. Conversely, approaches like Tool-Star~\cite{toolstar} and Tool-Light~\cite{toollight} necessitate an initial stage of supervised fine-tuning with synthetic data before engaging in reinforcement learning. Reliance on synthetic data is not only computationally expensive but also introduces training redundancy.

\begin{figure}[t]
	\centering
	\includegraphics[width=0.95\columnwidth]{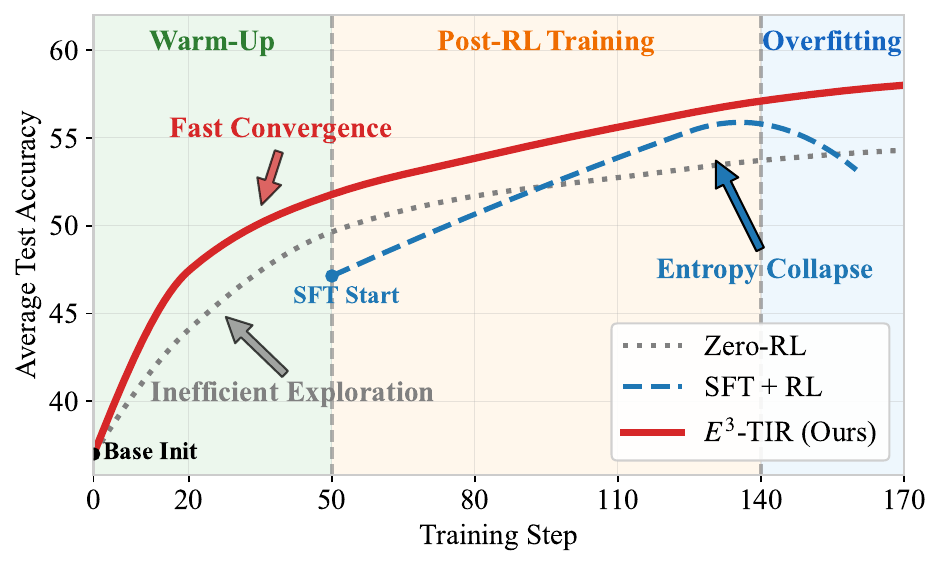}e
    	\caption{(a) Zero-RL:  Inefficient exploration and slow convergence. (b) SFT+RL: Vulnerable to overfitting and performance collapse after a high start. (c) \ourapproach: Enables rapid adaptation via expert guidance with stable, continuous growth.  }
        \label{fig1:motivation}
\end{figure}

\begin{figure*}[th]
    \centering \includegraphics[width=0.98\linewidth, trim=0 0 0 0, clip]{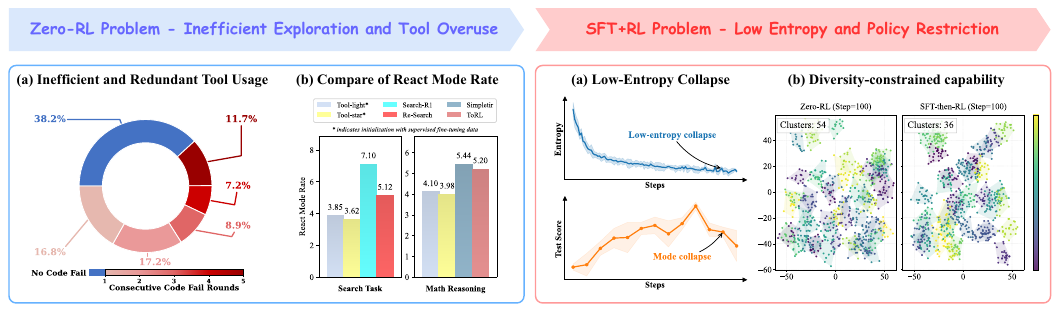}
    \caption{Statistical analysis of the limitations of current training paradigms.}
    \label{fig:motivation}
\end{figure*}

Our empirical analysis further reveals the deep-seated limitations of existing paradigms. Zero-RL, lacking prior guidance, exhibits inefficient exploration and mode degradation: the model requires extensive trial-and-error to generate valid code at an early stage (Fig.~\ref{fig:motivation} left,a) and often converges to a ``react mode’’~\cite{demystifying} that over-relies on tools at the expense of reasoning (Fig.~\ref{fig:motivation} left, b). Conversely, SFT-then-RL, while benefiting from training, leads to capability rigidity. This necessitates an adaptation period to overcome distribution shifts (Fig.~\ref{fig:motivation} right,a) and frequently triggers low-entropy collapse in the later stages of training (Fig.~\ref{fig:motivation} right,b), which severely restricts sample diversity and impedes further performance gains. 

To address these challenges, we propose \ourapproach, a framework designed to balance exploration diversity with efficiency. In the sampling stage,we initiate branching sampling from high-entropy ``anchors’’ within expert trajectories to explore diverse reasoning paths. Subsequently, we dynamically filter the mixed trajectories based on gradient validity and relative performance.  For mix policy optimization, we employ hybrid advantage estimation to capture preference variances among branches sharing the same prefixes. Meanwhile, advantage blocking resolves
gradient conflicts on shared prefixes, ensuring training stability.

 The main contributions of this study are as follows:
 \begin{itemize}[noitemsep,nolistsep]
 \item
Through empirical analysis, we uncover the ``react mode’’  inherent in Zero-RL tool call, alongside the phenomena of low-entropy collapse and capability rigidity that occur in the later stages of SFT+RL.
 \item  We propose \ourapproach, which provides dynamic experience sources for an agent’s early training by integrating three distinct types of experience: expert prefixes, expert guidance, and self-exploration. This approach effectively balances the diversity and efficiency of early-stage exploration. 
 \item \ourapproach outperforms SFT and RL baselines using <10\% of the synthetic data. With a minimal warm-up, it achieves a 1.46$\times$ gain in ROI—a composite metric of data, training, and performance efficiency.
\end{itemize}

\section{Preliminaries}
\subsection{Multi-Turn Agent Framework}

We consider an LLM agent performing a task $x \in D$ through iterative interactions. At each step $t$, the policy $\pi_{\theta}(\cdot|\mathcal{H}_{<t})$ generates a thought $\tau_t$ and an action $a_t$, eliciting an observation $o_t$ from the environment. This yields a trajectory $\mathcal{H} = \{(\tau_t, a_t, o_t)\}_{t=1}^T$. As the trajectory length grows linearly with the number of steps, optimizing $\pi_{\theta}$ for complex tasks becomes increasingly challenging due to the accumulation of history.
\subsection{Agentic Reinforcement Learning}

Agentic RL typically adopts policy-gradient methods to optimize the agent policy $\pi_{\theta}$. We formulate the agentic RL training objective as: 
\begin{equation}
\max_{\pi_\theta} \mathbb{E}_{x \sim \mathcal{D}, \mathcal{H}  \sim \pi_\theta(\cdot|x)} \left[ r_\phi(x, \mathcal{H} ) \right] - \beta \mathbb{D}_{\text{KL}} \left[ \pi_\theta(\cdot) \| \pi_{\text{ref}}(\cdot) \right]
\end{equation}
where $\pi_{\theta}$ represents the policy LLM, $\pi_{ref}$ is the reference LLM, $r_{\phi}$ and $\mathbb{D}_{\text{KL}}$ denotes the reward function and KL divergence respectively.  To ensure proper credit assignment, we implement token masking ($o_t$). The loss is computed exclusively on the agent's outputs ($\tau_t$ and $a_t$), thereby excluding environmental feedback from gradient updates.

%The input $x$ is sampled from dataset $D$, and $\mathcal{H}$ is the output trajectory, interleaved with tool-call feedback. 
% To ensure proper credit assignment, we implement feedback token masking ($o_t$). The loss is computed exclusively on the agent's outputs ($\tau_t$ and $a_t$), thereby excluding environmental feedback from gradient updates1

\begin{figure*}[t]
    \centering 
    \includegraphics[width=0.98\linewidth]
    {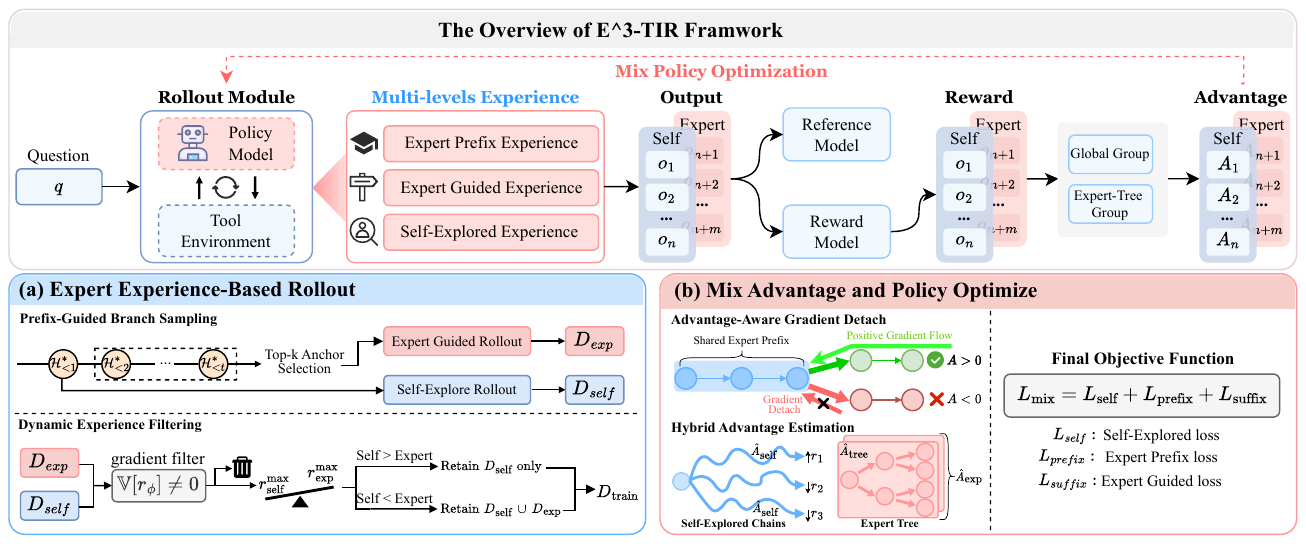}
    \caption{Illustration of our \ourapproach framework. (a) Branching exploration from expert anchors and dynamic experience filtering.  (b) Hybrid advantage estimation and advantage-aware gradient detachment.}
    \label{framwork}
\end{figure*}

\section{Related Work}

\paragraph{Tool Integrated Reasoning.}

Tool-Integrated Reasoning (TIR)~\cite{understandtir, agentrltir} empowers LLMs to overcome the inherent limitations of pure in-context reasoning by leveraging external utilities such as code interpreters and search engines. For question-answer (QA) tasks, efforts such as Search-R1~\cite{search-r1}, R1-Search~\cite{R1-Searcher} , and Re-search~\cite{research} primarily focus on utilizing search tools. Conversely, in mathematical tasks, TIR largely relies on agent-generated Python code to enhance reasoning accuracy and efficiency. Current methodologies for training LLMs in tool invocation present a distinct set of challenges. One path involves Zero-shot RL based online exploration, where models like ToRL~\cite{torl}, OTC~\cite{otc}, and SimpleTIR~\cite{xue2025simpletir} learn autonomous tool calls. However, this often leads to inefficient exploration and redundant tool invocation. The second path utilizes Supervised Fine-Tuning (SFT) for expert knowledge injection, with works like ReTool~\cite{retool}, Tool-star~\cite{toolstar}, focusing on synthesizing high-quality agent data to inject tool-use priors before RL training. 

%Nevertheless, the inherent low-entropy property of SFT typically results in distribution collapse, severely constraining the diversity and ultimate performance gains of subsequent RL optimization. 
%To address this critical diversity-efficiency trade-off, \ourapproach proposes a novel framework that integrates online exploration with expert experience to enable LLMs to acquire tool-calling skills in a more natural and effective manner.

\paragraph{Agent Reinforcement Learning.} 

Reinforcement Learning (RL) plays a pivotal role in enabling LLMs agents to adapt effectively to dynamic and open-ended environments~\cite{survey3}. Recent efforts have primarily focused on addressing the challenges of sparse rewards and complex decision-making in multi-turn interactions~\cite{feng2025group, agentrltir}. RAGEN~\cite{regen} integrates the reasoning state with the environmental interaction into turn-level responses and trains the agent using trajectory-level RL. To further alleviate the issue of reward sparsity,  IGPO~\cite{icpo} introduce more fine-grained rewards, such as leveraging the agent's self-confidence. To overcome the difficulty of capturing complex multi-turn behaviors with trajectory-level RL, Tree-GRPO~\cite{treegrpo} employs a Grouped Relative Policy Optimization based on tree search, allowing the model to effectively learn intermediate decisions. Furthermore, ARPO~\cite{dong2025arpo} secures more diverse reasoning paths through high-entropy branch sampling, and AEPO~\cite{dong2025aepo} advances this direction by proposing a policy optimization method based on entropy balancing.

\section{Methodology}\label{section: Methodology}

In our  \ourapproach framework, we divide the overall training flow into two main parts: 

1) \textit{\textbf{Expert Experience-Based Rollout}}. First, we utilize the current policy to conduct self-exploration and expert-guided exploration, respectively, performing branch sampling from the expert prefixes to form the experience pools $\mathcal{D}_{\text{exp}}$ and $\mathcal{D}_{\text{self}}$ (Section~\ref{text:sample}). Subsequently, we filter the experiences pools based on gradient validity and performance advantage to construct the final training set $\mathcal{D}_{\text{train}}$ (Section~\ref{text:filter}).

2) \textit{\textbf{Mix Advantage and Policy Optimization}}. We categorize advantages into global advantage and expert-tree internal advantage to determine relative trajectory preference, processing different experiences separately (Section~\ref{text:advantage}). To update the policy using mixed experiences, we introduce an off-policy shape and resolve gradient conflicts in shared prefixes via advantage-aware gradient blocking (Section~\ref{text:policy}).

\subsection{Expert Experience-Based Rollout}
\label{text:branch}
\subsubsection{Prefix-Guided Branch Sampling}
\label{text:sample}
    Unlike self-exploration methods~\cite{dong2025arpo,dong2025aepo}, we propose expert-guided branch sampling, using expert prefixes as anchors to reduce search redundancy and ensure reasoning correctness (Theoretical Proof provided in Appendix~\ref{sec:appendix_theory}).

Empirical analysis confirms this advantage: branching from expert prefixes yields significantly higher policy entropy and reward variance compared to self-generated trajectories (Figure~\ref{fig:method}). This demonstrates that expert anchors provide a more informative and gradient-valuable exploration space for subsequent policy optimization. 

Accordingly, for each problem $x$, we allocate a self-explore budget $N$ and an expert-enhanced budget $M$. The sampling process is formalized as three stages:

\paragraph{1. Initialization.} We initialize $\mathcal{D}_{\text{self}}$ with $N$ starting states. For the expert trajectory $\mathcal{H}^{*}$, we measure exploration value using the step-wise policy entropy $h_t$, conditioned on the prefix $\mathcal{H}_{<t}$:
    \begin{equation}
        h_t = - \sum_{y \in Y} \pi_{\theta}(y | \mathcal{H}_{<t}) \log \pi_{\theta}(y |\mathcal{H}_{<t})
    \end{equation}
    where $Y$ denotes the generated token sequence. We construct $\mathcal{D}_{\text{exp}}$ by selecting the top-$k$ high-entropy prefixes, assigning each a unique tree ID $\mathcal{T}$.
    \begin{equation}
        \mathcal{D}_{\text{exp}} \gets \text{Top-K}(\{ (\mathcal{H}^{new}=\mathcal{H}_{<t},\mathcal{T}^{new}) \}_{t=2}^{|\mathcal{H}^{*}|})
    \end{equation}
    
\paragraph{2. Sampling.} To fully leverage expert experience, we sample non-terminated nodes from $\mathcal{D}_{\text{exp}}$ at the end of each round to spawn diverse branches. The sampling probability is determined by the exploration gain $P_t$:
    \begin{equation}
        P_t = \frac{(h_t - h_{t-1})}{\hat{n}_{\mathcal{H}}} \cdot \mathbf{I}(|\mathcal{D}_{\text{exp}}| < M)
    \end{equation}
    where $\alpha$ is a base sampling probability and $\hat{n}_{\mathcal{H}}$ represents the number of branches of this expert tree. If $P_t$ exceeds a threshold $\gamma$ and the budget $M$ is not reached, a copy of the trajectory  is created and inserted into $\mathcal{D}_{\text{exp}}$:
    \begin{equation}
    \mathcal{D}_{\text{exp}} \gets \mathcal{D}_{\text{exp}} \cup \begin{cases}
    \{ \text{Copy}(\mathcal{H}, \mathcal{T}) \}, & \text{if } P_t > \gamma \\
    \emptyset, & \text{otherwise}
    \end{cases}
    \end{equation}
    
\paragraph{3. Expansion.}  For all active states/trajectories $\mathcal{H}$ in $\mathcal{D}_{\text{Self}}$ and $\mathcal{D}_{\text{exp}}$, we execute a single step of outward Rollout (expansion) using $\pi_{\theta}$. The experience pools are updated with the resulting longer trajectories $\mathcal{H}_{\text{new}}$:
    \begin{equation}
        \mathcal{D}_{\text{total}} \gets \text{Update}(\{ \mathcal{H}^{\text{new}}\sim \text{Rollout}(\mathcal{H}) \,\, \text{if} \,\,|\mathcal{H}| < T \})
    \end{equation}
    We iterate Steps 2 and 3 until termination or the turn limit $T$. This constrained rollout ensures parallel efficiency, avoiding the unbounded expansion of MCTS~\cite{stepsearch}.

%Through this sampling, \ourapproach effectively grounds exploration in correct expert prefixes. Furthermore, by maximizing reward variance at uncertain states, it generates step-level contrastive preferences that sharpen critical decision node. The complete procedure is detailed in Algorithm~\ref{alg:sample}.

\subsubsection{Dynamic Experience Filtering}
 \label{text:filter}
To empower the agent to dynamically adjust its knowledge boundaries, we introduce a filtering mechanism based on gradient and performance.

% First, regarding gradient validity, effective policy optimization requires clear reward signals. Consequently, we discard any experience pool lacking reward variance, retaining only subsets $\mathcal{D}'$ that provide valid preference contrast:

First, regarding gradient validity, we always retain $\mathcal{D}_{\text{self}}$ as the global advantage baseline. For the expert pool: if an expert tree $\mathcal{D}_{\text{exp}}^{T^i}$ has zero reward variance $\mathbb{V}$, only one representative trajectory $\mathcal{H}_i$ is kept to reduce redundancy; otherwise, the full tree is retained:

\begin{equation}
\mathcal{D}_{\text{exp}}' = \begin{cases}
\bigcup_{i} \{ \mathcal{H} \sim \mathcal{D}_{\text{exp}}^{T^i} \}, & \text{if } \mathbb{V}[r(\mathcal{D}_{\text{exp}}^{T^i})] = 0 \\
\bigcup_{i} \mathcal{D}_{\text{exp}}^{T^i}, & \text{otherwise}
\end{cases}
\end{equation}

Second, to prevent the model from regressing to suboptimal patterns, we construct the final training set $\mathcal{D}_{train}$ based on a performance criterion. Specifically, we compare the maximum rewards yielded by self-exploration against those from expert guidance:
\begin{equation}
\mathcal{D}_{\text{train}} = 
\begin{cases} 
\mathcal{D}_{\text{self}}, & \text{if } r^{max}_{\phi}(\mathcal{D}_{\text{self}}) \geq r^{max}_{\phi}(\mathcal{D}'_{\text{exp}}) \\
\mathcal{D}_{\text{self}} \cup \mathcal{D}'_{\text{exp}}, & \text{otherwise}
\end{cases}
\end{equation}

This logic enforces an adaptive learning curriculum: expert trajectories are discarded when the agent matches or exceeds their performance, prioritizing the agent's superior policy. 

\begin{figure}[t]
	\centering
	\includegraphics[width=0.95\columnwidth]{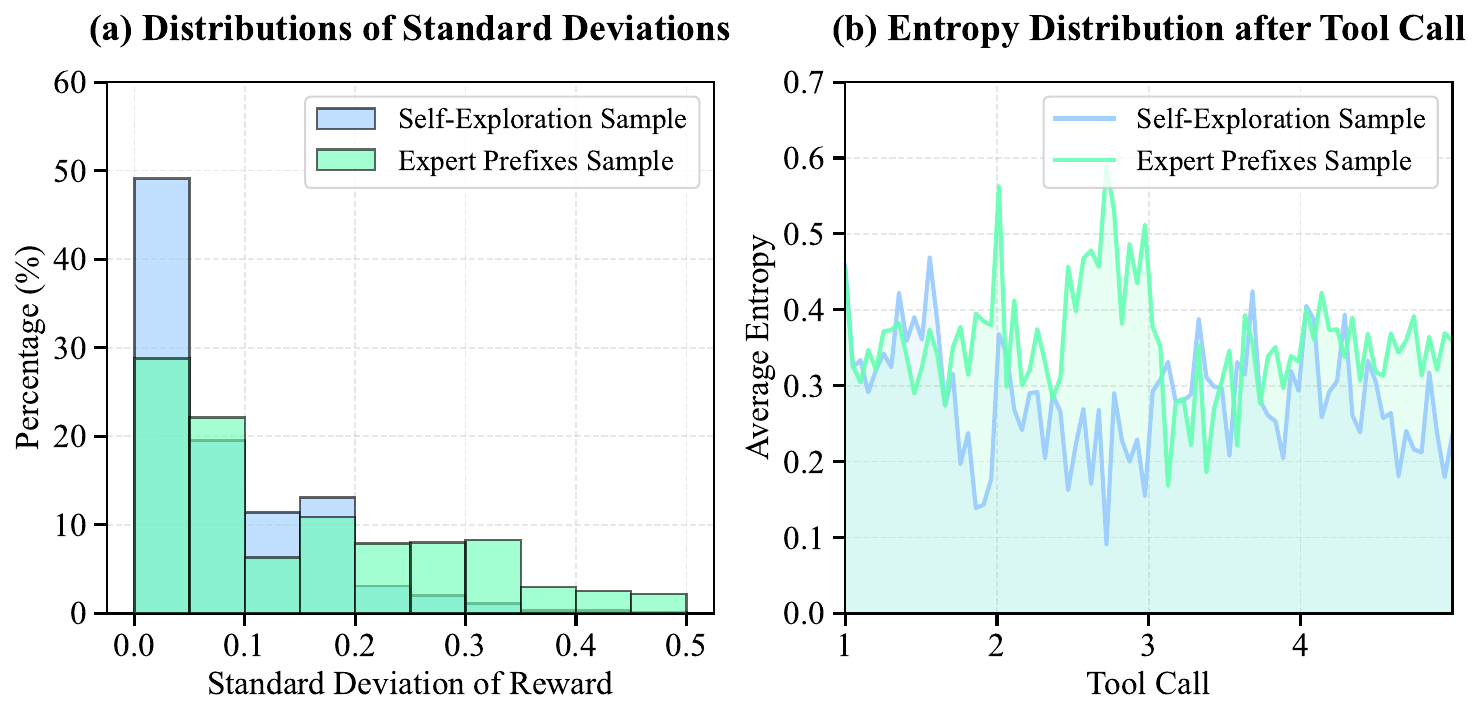}
    	\caption{Distribution of Standard Deviation and Entropy under Different sampling strategies.}
        \label{fig:method}
\end{figure}

\subsection{Mix Advantage and Policy Optimize}
\subsubsection{Hybrid Advantage Estimation}
\label{text:advantage}

Upon collecting the final experience pool $\mathcal{D}_{\text{train}}$, the primary step is to determine the relative advantage for each trajectory $\mathcal{H}$ in $\mathcal{D}_{\text{train}}$. This differentiation of relative advantage occurs at two critical levels:

\paragraph{1) Global Advantage} ($\hat{A}_{\text{global}}$). This metric measures the trajectory's performance relative to the average performance across the entire pool $\mathcal{D}_{\text{train}}$. We leverage the global advantage for baseline estimation, which is crucial for stabilizing training:
\begin{equation}
    \hat{A}_{\text{global}}(\mathcal{H}^i)=\frac{r_{\phi}(\mathcal{H}^i)-\text{mean}(r_{\phi}(\mathcal{H}^{j}))^{D_{\text{train}}}_{j=1}}{\text{mean}(r_{\phi}(\mathcal{H}^{j}))^{D_{\text{train}}}_{j=1}}
\end{equation}

\paragraph{2) Expert-Tree Relative Advantage}  ($\hat{A}_{\text{exp-tree}}$). We compute an internal advantage within $\mathcal{D}_{\text{exp}}$ to measure variance among branches sharing the prefix $\mathcal{T}$. This guides the model to capture the expert's granular thought patterns.

\begin{equation}
    \hat{A}_{\text{tree}}(\mathcal{H}^i)=  \frac{r_{\phi}(\mathcal{H}^i)-\text{mean}(r_{\phi}(\mathcal{H}^{j}))^{D^{(\mathcal{T}=\mathcal{T}^i)}_{\text{exp}}}_{j=1}}{
    |D^{(\mathcal{T}=\mathcal{T}^i)}_{\text{exp}}| \cdot \text{mean}(r_{\phi}(\mathcal{H}^{j}))^{D^{(\mathcal{T}=\mathcal{T}^i)}_{\text{exp}}}_{j=1}}
\end{equation}
Since different expert trees may share the same prefix, we normalize the advantage by dividing it by the number of trajectories within the current expert tree to prevent over-optimization.

Consequently, the advantage estimation for trajectories within $\mathcal{D}_{\text{self}}$ is solely derived from the $\hat{A}_{\text{global}}(\cdot)$. Conversely, for trajectories in $\mathcal{D}_{\text{exp}}$, the advantage estimation combines both the  $\hat{A}_{\text{global}}(\cdot)$ and $\hat{A}_{\text{tree}}(\cdot)$.

\begin{equation}
\label{eq:advantage_definitions}
    \left\{
    \begin{aligned}
        \hat{A}_{\text{self}}(\mathcal{H}_{i}) &= \hat{A}_{\text{global}}(\mathcal{H}_i) \\
        \hat{A}_{\text{exp}}(\mathcal{H}_{i}) &= \hat{A}_{\text{global}}(\mathcal{H}_i) + \hat{A}_{\text{tree}}(\mathcal{H}_i)
    \end{aligned}
    \right.
\end{equation}

\subsubsection{Mix Policy Optimization}
\label{text:policy}

Since distribution shifts in expert trajectories cause training instability, we adopt off-policy policy shape~\cite{luffy} to effectively utilize the mixed experience $\mathcal{D}_{\text{train}}$. The policy object is:

\begin{equation}
\begin{split}
\mathcal{J}_{\text{Hybrid}}&(\theta) = \frac{1}{Z} \Bigg( \underbrace{\sum_{\mathcal{H}_i \in \mathcal{D}_{\text{self}}} \sum_{t=1}^{|\mathcal{H}_i|} \text{CLIP}(r_{i,t}(\theta), \hat{A}^{\text{self}}_i, \epsilon)}_{\text{Pure Self-Exploration (On-Policy)}}  \\
& + \quad \underbrace{\sum_{\mathcal{H}_k \in \mathcal{D}_{\text{exp}}} \left( \mathcal{L}_{\text{prefix}}(\mathcal{H}_k) + \mathcal{L}_{\text{suffix}}(\mathcal{H}_k) \right)}_{\text{Expert prefixes and Expert guidance  (Mix-Policy)}} \Bigg)
\end{split}
\end{equation}

The mixed objective is divided into two parts: (1) the self-exploration part and (2) the mixed policy part. The mixed policy component, in turn, includes the prefix and suffix parts.  Their specific forms are as follows:

\begin{subequations}
\begin{align}
\mathcal{L}_{\text{prefix}}(\mathcal{H}_k) &= \sum_{t=1}^{L_{\text{prefix}}} \text{CLIP}(\rho_{k,t}(\theta), \hat{A}^{\text{exp}}_k)  \\
\mathcal{L}_{\text{suffix}}(\mathcal{H}_k) &= \sum_{t=L_{\text{prefix}}+1}^{|\mathcal{H}_k|} \text{CLIP}(r_{k,t}(\theta), \hat{A}^{\text{exp}}_k) \label{eq:loss_suffix}
\end{align}
\end{subequations}

Where the importance sampling ratio $r_{k,t}$ is defined at the token level $t$, and $\rho_{k,t}$ is the policy reshaping function $f(\pi_{\theta})=\pi/(\pi+\lambda)$

Branching sampling creates optimization conflicts where failed explorations ($A<0$) incorrectly penalize valid shared prefixes. We address this via \textit{Advantage-Aware Gradient Detaching}, which selectively blocks prefix gradients for failure paths ($A<0$) while retaining full updates for successful ones ($A>0$). The modified objective is:

% Within the \ourapproach framework, branching sampling creates a gradient conflict on shared prefixes: Failed explorations ($A<0$) incorrectly penalize the valuable prior knowledge in the shared expert prefix.

% To address this, we propose advantage-aware gradient detaching, which implements an asymmetric optimization by gating the gradient based on the advantage $A_t$: \textbf{(1) Success} ($A_i > 0$): The full gradient reinforces the entire path. \textbf{(2) Failure} ($A_i < 0$): The gradient to the shared prefix is detached (blocked), penalizing only the model's incorrect suffix. The modified objective $\mathcal{L}_{\text{prefix}}$ is as follows:
\begin{equation}
\mathcal{L}_{\text{prefix}}(\mathcal{H}_k) = \sum_{t=1}^{L_{pre}} \text{CLIP}(\rho_{k,t}(\theta), \hat{A}^{\text{exp}}_k) \cdot \mathbb{I}(\hat{A}^{\text{exp}}_k > 0) \label{eq:loss_prefix} 
\end{equation}
Through this asymmetric optimization strategy, we theoretically eliminate the optimization ambiguity of the shared prefix under different branches.

\section{Experiments}
\label{exp}

\input{tables/compare_to_algothrim}

\begin{figure*}[htbp]
	\centering
	\begin{subfigure}[b]{0.495\linewidth}
    		\centering
    		\includegraphics[width=\linewidth]{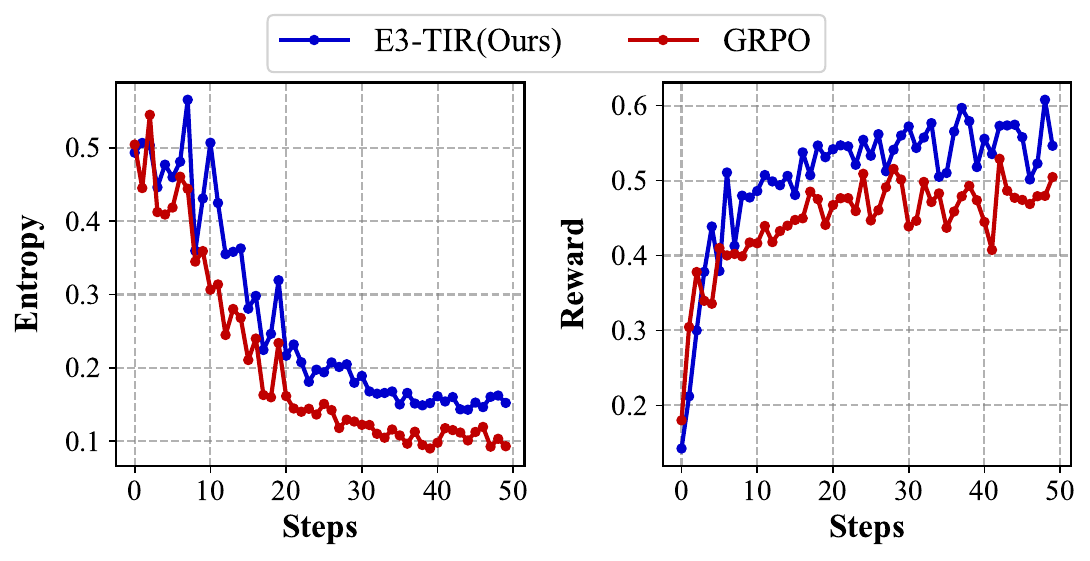}
    		\caption{Entropy and Reward Curves on Warm-Up Stage}
    		\label{fig3:train_1}%文中引用该图片代号
	\end{subfigure}
	\hfill % 添加水平填充
	\begin{subfigure}[b]{0.495\linewidth}
    		\centering
    		\includegraphics[width=\linewidth]{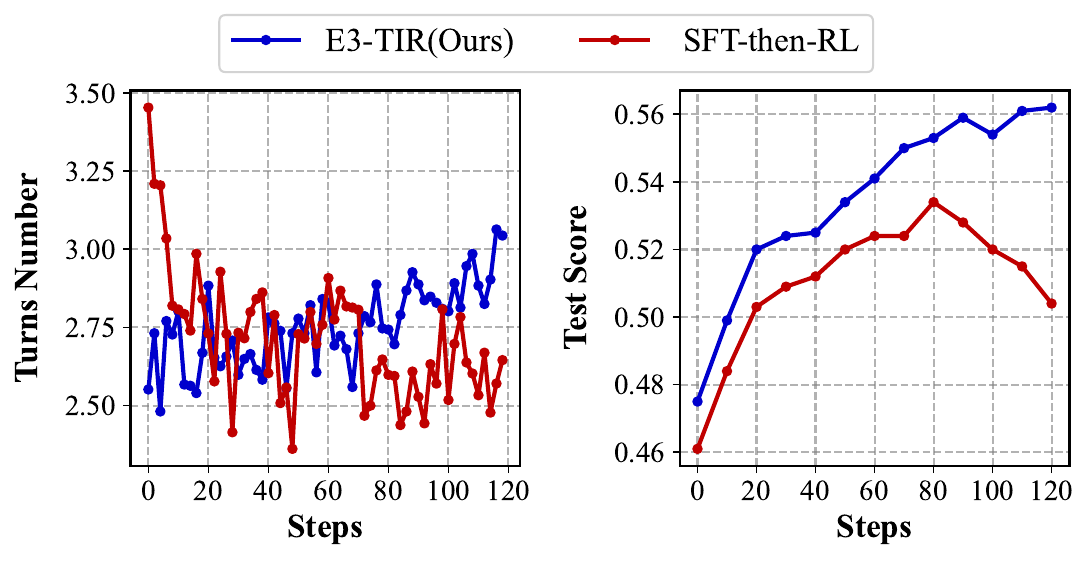}
    		\caption{Turns Number and Test Score on Post-RL Stage}
    		\label{fig3:train_2}%文中引用该图片代号
	\end{subfigure}
	\caption{ Comparison of the training curves on Qwen2.5-3B-Instruct.  }
	\label{fig3:train}
\end{figure*}

\subsection{Experiment Setup}
\label{setup}

\paragraph{Tasks and Metrics.}
To comprehensively evaluate the effectiveness of our \ourapproach algorithm in TIR, we conduct experiments on the following two types of tool-use tasks:

1) \textit{\textbf{Mathematical Reasoning:}}. In this task, the model can automatically generate and execute code. We use metrics such as AIME24, AIME25, MATH500~\cite{math500}, AMC23, and GSM8K~\cite{cobbe2021gsm8k} to evaluate the model's mathematical reasoning capabilities.

2) \textit{\textbf{Knowledge-Intensive Reasoning:}}  We evaluate on knowledge question benchmarks, including:  HotpotQA~\cite{yang2018hotpotqa}, 2WikiMultiHopQA~\cite{xanh2020_2wikimultihop}, Musique~\cite{trivedi2021musique}, and Bamboogle~\cite{Bamboogle}, SimpleQA~\cite{Bamboogle}.

\paragraph{Baselines.} We compare the proposed \ourapproach against the following baseline: (i) \textbf{\textit{Training Pipeline.}}We compare against SFT-only, SFT-then-RL (via GRPO), Zero-RL (via GRPO),  trained on the same dataset. (ii) \textbf{\textit{LLM-based Search Agent.}} including Search-o1~\cite{Search-o1}, Search-R1~\cite{jin2025search} and Tree-GRPO~\cite{treegrpo}. (iii) \textbf{\textit{LLM-based Math Agent.}} including ToRL~\cite{torl} and SimpleTIR~\cite{xue2025simpletir}.  (iv) \textbf{\textit{Mult-Tool Integrated Agent.}} including Tool-Star~\cite{toolstar}, ReCall~\cite{chen2025researchlearningreasonsearch} and ARPO~\cite{dong2025arpo}.  

\paragraph{Training Guideline.} To ensure reproducibility, all training frameworks and datasets are sourced from open-access resources.  Specifically, the expert trajectories data are from the Tool-Star SFT dataset. For the RL data, we also utilize the Tool-Star RL dataset.

\textbf{1. Initialization:} To mitigate reward collapse during the initial RL training stage, the model is initialized using only a small amount of self-distillation data.. 

\textbf{2. RL Stage:} The RL phase adopts a two-stage training paradigm:
\begin{itemize}[noitemsep,nolistsep]
    \item \textbf{E3-TIR (Warm Up)}. In this stage, the model undergoes mix policy training using the SFT dataset augmented with expert experience.
    \item \textbf{E3-TIR (Post-RL)}. In this stage, the model is trained using the standard RL dataset without any specific modifications.
\end{itemize} 

The detailed introduction of training setup are listed in Appendix~\ref{Appendix:training}. 

\input{tables/main_table_compare_method}

\subsection{Main Results}

\paragraph{Comparison with Training Paradigms.} Table~\ref{tab:compare_modified} presents our main experimental results. Compared to baseline methods, \ourapproach consistently achieves superior performance. Whether applied to smaller 3B models or larger 7B/8B models, our mechanism yields a 6\% average performance gain while simultaneously achieving higher tool-use efficiency.

\paragraph{Comparison with SOTA Methods.}  Table~\ref{tab:main_table} benchmarks \ourapproach against current open-source SOTA methods. Results demonstrate that, under comparable parameter scales, our approach exhibits superior performance for both single-tool and multi-tool agents, maintaining a robust competitive edge across diverse benchmarks.

\begin{figure}[t]
	\centering
	\includegraphics[width=1\columnwidth]{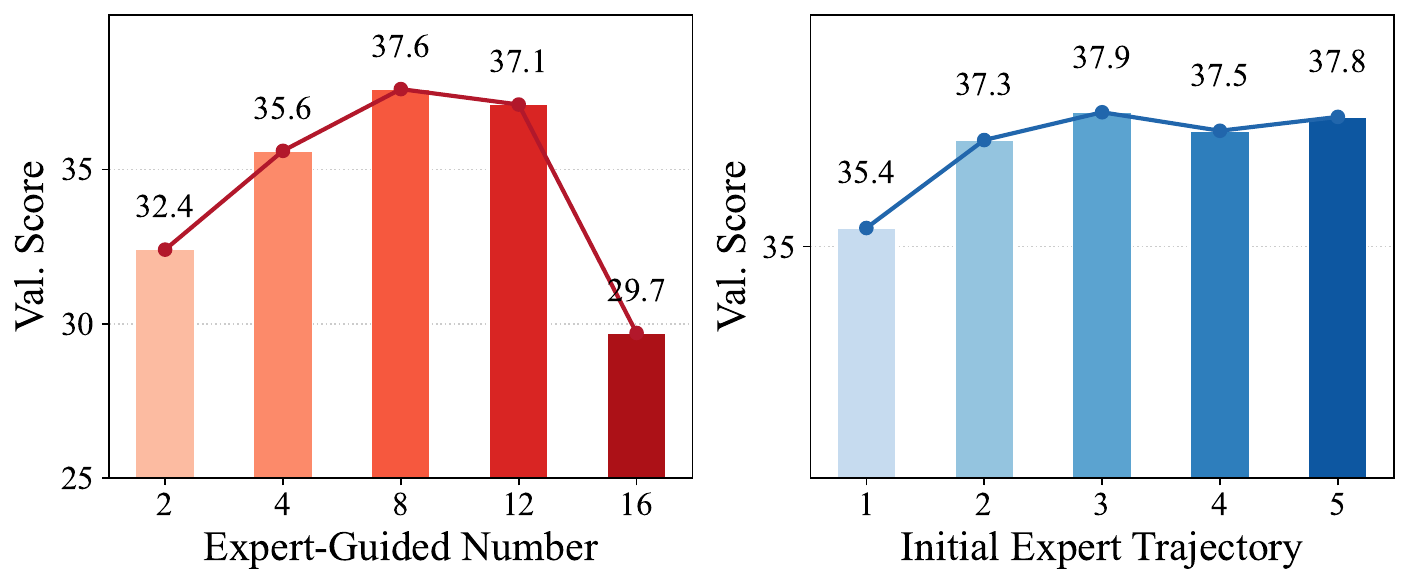}
    	\caption{Ablation analysis of different hyper-parameters in Expert Branch Sample.}
        \label{fig:ablation_sample}
\end{figure}

\paragraph{Balancing Exploration Quality and Efficiency.} As shown in Figure~\ref{fig3:train_1}, \ourapproach achieves rapid reward ascent by leveraging expert anchors, in contrast to the cold start of Zero-RL. Simultaneously, the model maintains high policy entropy, indicating that it effectively exploits expert knowledge without loss diversity.

\paragraph{Good Environmental Adaptability.} As shown in Figure~\ref{fig3:train_2}, the SFT model struggles to adapt to the multi-turn interactions, leading to declining tool usage and an overfitting ``fast rise, then drop’’ pattern in accuracy Conversely, \ourapproach demonstrates robust exploration with steadily increasing tool usage and maintains a stable upward performance curve throughout the process.

\begin{figure}[t]
	\centering
	\includegraphics[width=1\columnwidth]{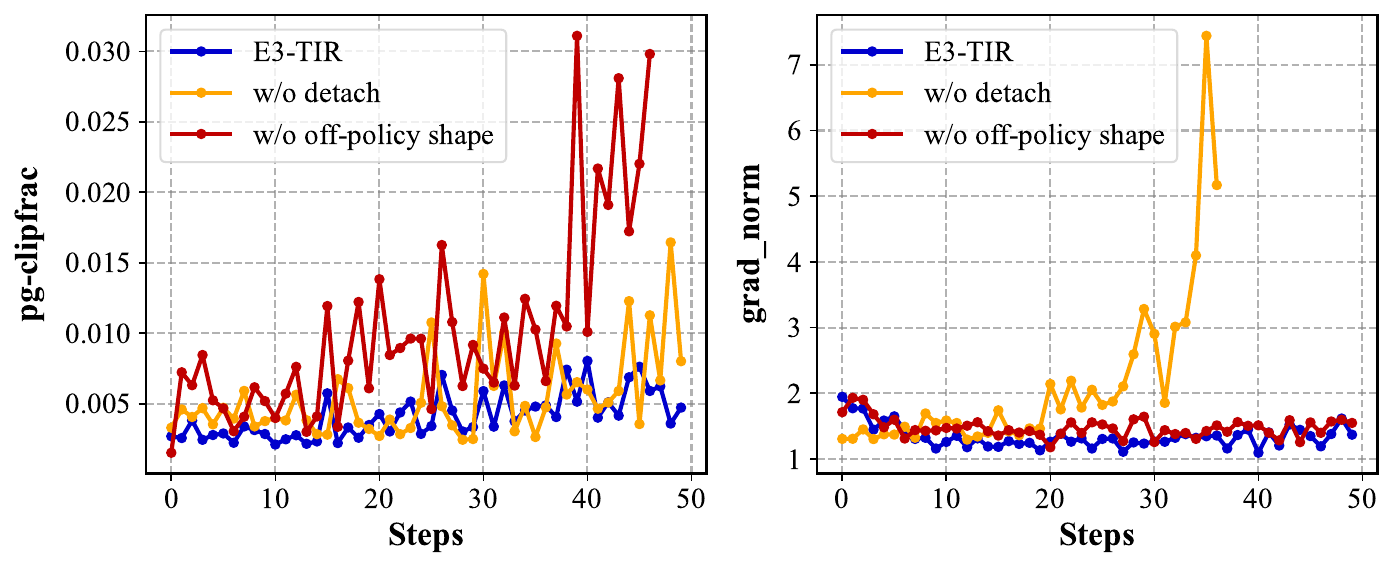}
    	\caption{Ablation analysis of clip ratio and gradient norm on different mix policy optimization.}
        \label{fig:ablation_mix}
\end{figure}

\begin{table}[t]
    \centering
    % 设置字体大小
    \fontsize{9pt}{12pt}\selectfont
    % 需要导入宏包: \usepackage{multirow}, \usepackage{booktabs}, \usepackage{colortbl}, \usepackage{xcolor}
    % 使用 resizebox 将表格宽度限制为栏宽
    \resizebox{\columnwidth}{!}{
        \begin{tabular}{lcccc}
            \toprule
            % 表头第一行：Method 跨两行，Knowledge 和 Computational 各跨两列
            \multirow{2}{*}{\textbf{Method}} & \multicolumn{2}{c}{\textbf{Knowledge.}} & \multicolumn{2}{c}{\textbf{Computational.}} \\
            % 部分横线，分别在 2-3 列和 4-5 列下方
            \cmidrule(lr){2-3} \cmidrule(lr){4-5}
            % 表头第二行
             & \textbf{HotQA} & \textbf{Musique.} & \textbf{ACM23} & \textbf{MATH} \\ 
            \midrule
            % 第一行数据：Tool-Star (3B)，带背景色
            \rowcolor[RGB]{236,244,252} 
            \ourapproach (Warm up) & 44.9 & 19.9 & 43.3 & 65.0 \\
            \midrule
            % 消融实验数据行
            % 使用 _{} 实现下标，\textcolor{red}{} 实现红色字体
            \textit{w/o} $\hat{A}_{\text{global}}$ & $35.4_{\textcolor{red}{(-9.5)}}$ & $13.7_{\textcolor{red}{(-6.2)}}$ & $36.8_{\textcolor{red}{(-6.5)}}$ & $56.3_{\textcolor{red}{(-8.7)}}$ \\
            \textit{w/o} $\hat{A}_{\text{tree}}$ & $42.6_{\textcolor{red}{(-2.3)}}$ & $17.8_{\textcolor{red}{(-2.1)}}$ & $43.5_{\textcolor{red}{(+0.2)}}$ & $63.1_{\textcolor{red}{(-1.9)}}$ \\
            \bottomrule
        \end{tabular}
    }
    \caption{Ablation on Hybrid Advantages Estimation.}
    \label{table:ablation}
\end{table}

\subsection{Ablation Study}\label{txt: Comparison with others}

\paragraph{Component-Level Ablation.} To isolate the contribution of each core design choice, we systematically remove individual components while keeping the same expert prefixes and training budget. We additionally compare against a simple weighted SFT+RL mixture baseline (Mix SFT+RL) that replaces our loss with a weighted combination following~\citet{hpt}. Results are reported over 3 independent runs.

\begin{table}[t]
    \centering
    \fontsize{9pt}{12pt}\selectfont
    \renewcommand{\arraystretch}{1.1}
    \resizebox{\columnwidth}{!}{
        \begin{tabular}{lcccc}
            \toprule
            \multirow{2}{*}{\textbf{Method}} & \multicolumn{2}{c}{\textbf{Knowledge.}} & \multicolumn{2}{c}{\textbf{Computational.}} \\
            \cmidrule(lr){2-3} \cmidrule(lr){4-5}
             & \textbf{HotQA} & \textbf{Musique.} & \textbf{AMC23} & \textbf{MATH} \\
            \midrule
            \rowcolor[RGB]{236,244,252} 
            \ourapproach & \textbf{51.4} & \textbf{26.9} & \textbf{56.0} & \textbf{85.2} \\
            \midrule
            \textit{w/o} Branch & $47.2_{\textcolor{red}{(-4.2)}}$ & $23.1_{\textcolor{red}{(-3.8)}}$ & $52.3_{\textcolor{red}{(-3.7)}}$ & $83.6_{\textcolor{red}{(-1.6)}}$ \\
            \textit{w/o} AAGD & $15.3_{\textcolor{red}{(-36.1)}}$ & $8.2_{\textcolor{red}{(-18.7)}}$ & $24.8_{\textcolor{red}{(-31.2)}}$ & $48.6_{\textcolor{red}{(-36.6)}}$ \\
            \textit{w/o} Hybrid Adv. & $49.5_{\textcolor{red}{(-1.9)}}$ & $25.6_{\textcolor{red}{(-1.3)}}$ & $55.2_{\textcolor{red}{(-0.8)}}$ & $84.1_{\textcolor{red}{(-1.1)}}$ \\
            Mix SFT+RL & $22.4_{\textcolor{red}{(-29.0)}}$ & $11.3_{\textcolor{red}{(-15.6)}}$ & $32.5_{\textcolor{red}{(-23.5)}}$ & $56.8_{\textcolor{red}{(-28.4)}}$ \\
            \bottomrule
        \end{tabular}
    }
    \caption{Component ablation on Qwen2.5-3B-Instruct. \textit{w/o} Branch: remove expert-guided branching; \textit{w/o} AAGD: remove advantage-aware gradient detaching; \textit{w/o} Hybrid Adv.: remove expert-tree advantage; Mix SFT+RL: weighted SFT+RL~\cite{hpt}.}
    \label{table:component_ablation}
\end{table}

As shown in Table~\ref{table:component_ablation}, removing AAGD leads to training collapse (accuracy drops to 20.6 / 14.7), confirming it is essential for resolving gradient conflicts on shared prefixes. Branching exploration contributes the largest performance gain among the remaining components: its removal causes a greater accuracy drop than removing hybrid advantage estimation, highlighting that diverse expert-anchored exploration is the primary driver of \ourapproach's effectiveness. The Mix SFT+RL baseline also collapses (27.6 / 32.3), demonstrating that standard SFT loss is ineffective for multi-turn, long-horizon tool-use tasks due to severe distribution shifts.

\paragraph{Parameter Ablation for Expert Sample.} We conducted an ablation study on two core parameters within the expert-guided sample module:

1) \textit{Balancing Self-Exploration ($N$) and Expert-Guided ($M$)}.  We fixed the total sampling budget at 16 and varied $M$ to evaluate model performance after the warm-up stage. As shown in Figure~\ref{fig:ablation_sample} (left), performance improves as $M$ increases (for $M < 8$), demonstrating that expert-guided trajectories effectively accelerate early convergence. However, performance deteriorates as $M$ increases further, indicating that retaining a sufficient proportion of self-generated trajectories is critical for stabilizing training.

2) \textit{Initial Expert Trajectories ($K$)}. With $M$ fixed at 8, we varied the initial value of $K$. As illustrated in Figure~\ref{fig:ablation_sample} (right), performance peaks at $K=3$. Beyond this point, increasing $K$ yields no significant performance variation. This is primarily attributed to the fact that the average number of interaction turns in expert trajectories typically ranges between 2 and 3.

\begin{figure}[t]
	\centering
	\includegraphics[width=1\columnwidth]{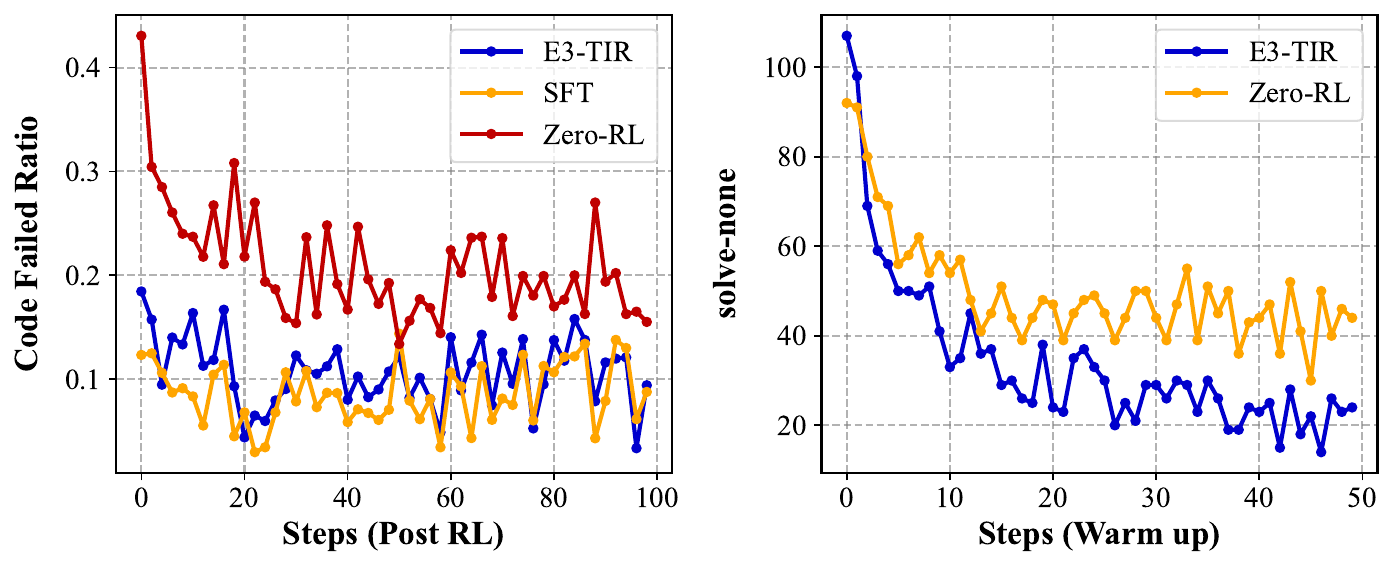}
    	\caption{Comparison of Code Failed Rate and Solve-None curves across different training stages.}
        \label{fig:code_none}
\end{figure}

\paragraph{Effectiveness of Mix Policy Optimization.} We validate the contribution of each component through ablation studies:

1) \textit{Mix Advantage Estimation}: Table~\ref{table:ablation} shows that $A_{tree}$ alone causes training collapse, while the combined $A_{tree} + A_{global}$ significantly outperforms $A_{global}$ alone. This confirms that internal advantage captures fine-grained step differences, effectively enhancing sample efficiency.

2) \textit{Off-policy Reshaping}: As shown in Figure~\ref{fig:ablation_mix} (left), direct updates with mixed data cause clip ratio spikes due to distributional disparity. In contrast, off-policy reshaping corrects these shifts, maintaining training stability.

3) \textit{Shared Prefix Gradient Detach}: Figure~\ref{fig:ablation_mix} (right) demonstrates that without detachment, gradient norms oscillate due to optimization conflicts. AAGD ensures stable gradient flow, preserving the expert prefix by resolving these conflicts.

\subsection{Analysis}

\paragraph{Code Execution Failure Analysis.} As shown in Figure~\ref{fig:code_none} (left), we compared code failure rates during the Post-RL stage. Benefiting from the warm-up, \ourapproach exhibits a significantly lower initial error rate than Zero-RL. Notably, \ourapproach consistently maintains a low level, demonstrating superior robustness in tool usage. 
%while the SFT model's failure rate gradually rises in later stages, 

\begin{table}[t]
    \centering
    \fontsize{9pt}{12pt}\selectfont
    \renewcommand{\arraystretch}{1.1}
    \resizebox{\columnwidth}{!}{
        \begin{tabular}{lcccc}
            \toprule
            \textbf{Method} & \textbf{Avg Score} & \textbf{Avg Calls} & \textbf{Fail Rate} & \textbf{Redundancy} \\
            \midrule
            Only SFT & 32.9 & 2.24 & 4.5\% & 0.6\% \\
            SFT-then-RL & 44.2 & \textbf{1.82} & \textbf{3.8\%} & 0.8\% \\
            Zero-RL & 43.2 & 2.52 & 7.4\% & 0.5\% \\
            \midrule
            \rowcolor[RGB]{236,244,252}
            \ourapproach & \textbf{46.7} & 1.97 & 4.0\% & \textbf{0.6\%} \\
            \bottomrule
        \end{tabular}
    }
    \caption{Tool-calling behavior audit on Qwen2.5-3B-Instruct. Avg Calls: average tool invocations per query; Fail Rate: proportion of invalid tool calls; Redundancy: ratio of repeated calls.}
    \label{table:tool_audit}
\end{table}

\paragraph{Tool-Calling Behavior Audit.} Beyond task accuracy, we audit tool-calling efficiency across training paradigms. As shown in Table~\ref{table:tool_audit}, \ourapproach achieves an average of 1.97 tool calls per query, substantially fewer than Zero-RL's 2.52, indicating that expert anchors help establish efficient invocation patterns during the warm-up phase. The tool failure rate of \ourapproach (4.0\%) is nearly half that of Zero-RL (7.4\%), confirming that expert-guided exploration effectively reduces invalid syntax errors and ill-formed queries. Redundancy rates remain consistently low across all methods, suggesting that repeated tool calls are primarily driven by unresolvable code errors rather than training paradigm differences.

\paragraph{Extension of Capability Boundary.} We tracked the proportion of effectively unsolvable problems during training. As illustrated in Figure~\ref{fig:code_none} (right), compared to Zero-RL, \ourapproach demonstrates a continuous decline in the Solve-None rate, stabilizing at a lower final level. This confirms that expert experience effectively expands the model's knowledge boundary, enabling it to resolve problems previously beyond its reach.

\paragraph{Warm-up Stage Cost and Efficiency Analysis.} We evaluated training efficiency gains across different warm-up settings using the ROI metric (Appendix~\ref{app:cost}). As shown in Figure~\ref{fig:heatmap}a, ROI exhibits an ``inverted-U’’ trend with increasing warm-up steps, attributed to the model's diminishing reliance on experts. Meanwhile, gains from expert data size saturate after 4k samples, demonstrating that minimal expert data is sufficient for efficient startup.

\paragraph{Comparison with Offline Fusion Baselines.} We further benchmark our mixed strategy against prior offline fusion methods~\cite{luffy, hpt}, as illustrated in Figure~\ref{fig:heatmap}b. While \ourapproach outperforms the off-policy free baseline by 19\%, alternative methods like Luffy and HPT (utilizing SFT loss) result in decreased performance. This advantage stems from the task's complexity: robust multi-turn TIR is difficult to achieve through sequence-level optimization alone.
\begin{figure}[t]
	\centering
	\includegraphics[width=0.95\columnwidth]{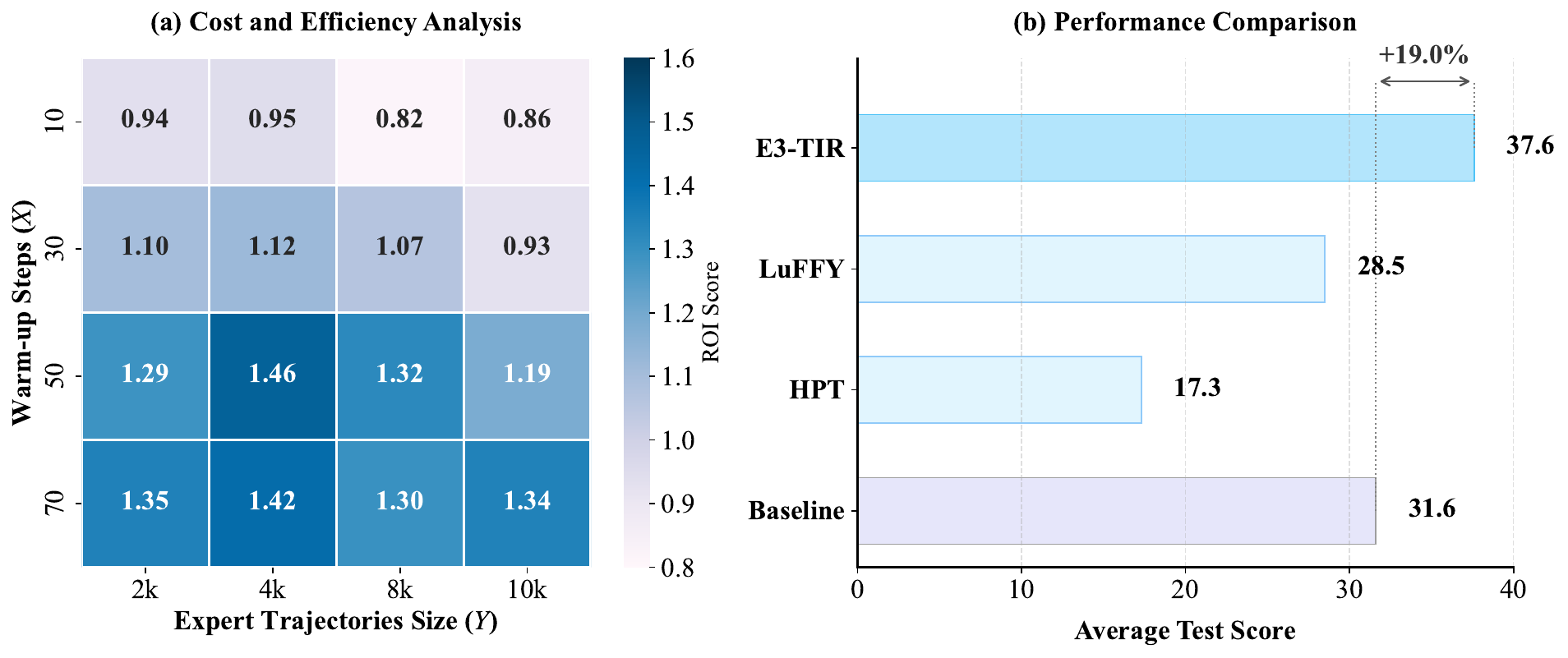}
    	\caption{(a) Cost-Benefit Heatmap during the Warm-up Stage. (b) Comparison with Other Mix Strategies.}
        \label{fig:heatmap}
\end{figure}
\begin{table}[t]
    \centering
    \fontsize{9pt}{12pt}\selectfont
    \renewcommand{\arraystretch}{1.1}
    \resizebox{\columnwidth}{!}{
        \begin{tabular}{lccc}
            \toprule
            \textbf{Method} & \textbf{ALFWorld} & \textbf{SCIWorld} & \textbf{Avg.} \\
            \midrule
            Only SFT & 62.5 & 43.0 & 52.8 \\
            SFT-then-RL & 81.5 & 65.0 & 73.3 \\
            Zero-RL & \underline{86.5} & \underline{63.5} & \underline{75.0} \\
            \midrule
            \rowcolor[RGB]{236,244,252}
            \ourapproach & \textbf{89.0} & \textbf{67.5} & \textbf{78.3} \\
            \bottomrule
        \end{tabular}
    }
    \caption{Generalization on long-horizon agent tasks using Qwen2.5-3B-Instruct. Expert data: 2K trajectories from AgentGYM~\cite{agentgym} per task.}
    \label{table:long_horizon}
\end{table}

\paragraph{Generalization on Long-Horizon Tasks.} To evaluate adaptability beyond math and QA, we conduct experiments on ALFWorld~\cite{alfworld} and SCIWorld~\cite{sciworld}, two long-horizon agent exploration tasks requiring up to 30 interaction turns. As shown in Table~\ref{table:long_horizon}, \ourapproach achieves the best results on both tasks (89.0 and 67.5). Notably, Zero-RL (86.5) outperforms SFT-then-RL (81.5) on ALFWorld, suggesting that SFT-induced distribution shifts can be particularly harmful in stateful environments. \ourapproach further improves upon Zero-RL by effectively leveraging expert guidance without sacrificing exploration flexibility, demonstrating its generalizability to complex, multi-step agent settings.

\section{Conclusion}
In this paper, we introduce \ourapproach , a framework that optimizes agent training by dynamically integrating different experience. By utilizing branching exploration around expert ``anchors’’ alongside a mixed policy optimization mechanism , \ourapproach balances exploration diversity with efficiency, transforming experience utilization into proactive acquisition and establishing a new paradigm for LLMs agent training.

\section*{Acknowledgements}
This work was supported in part by National Natural Science Foundation of China (62476070), Shenzhen Science and Technology Program \seqsplit{(JCYJ20241202123503005, \, GXWD20231128103232001, \,ZDSYS20230626091203008,\, KQTD20240729102154066)}, Department of Science and Technology of Guangdong (2024A1515011540) and National Key R\&D Program of China (SQ2024YFE0200592).

\section*{Limitations}
In this section, we discuss the limitations identified in this study that point towards directions for future improvement. First, while \ourapproach significantly mitigates the dependency on massive synthetic data, it remains sensitive to the quality of initial expert prefixes. Since the core mechanism relies on branching around these ``anchors’’, suboptimal or low-diversity seeds may inadvertently bias the search space, thereby constraining the model's exploration efficiency in highly complex tasks. Furthermore, our current evaluation is primarily strictly within the domains of math and QA. Consequently, the framework's generalizability to strictly more sophisticated, real-world scenarios—particularly those necessitating the simultaneous orchestration of multiple tools or maintaining state consistency over long-horizon interactions—remains to be fully validated.
% \paragraph{Dependency on Expert Trajectories.} Although \ourapproach substantially reduces the requirement for synthetic data to only 10\% of standard baseline volumes, its core mechanism remains centered on performing branching exploration around initial expert "anchors". The quality and diversity of these expert seeds may impose a performance ceiling; if initial trajectories are suboptimal or cover limited reasoning paths, the model's exploration efficiency in extremely complex tasks might be constrained.

% \paragraph{Scalability of Tool Categories.} Our experimental evaluation primarily focuses on two representative TIR tasks: mathematical reasoning via code interpreters and knowledge-intensive QA via search engines. While \ourapproach demonstrates superior robustness in these domains, its adaptability to more sophisticated real-world scenarios—such as multi-tool orchestration, long-horizon interactions with memory constraints, or dynamically evolving API environments—remains to be fully validated.

% \paragraph{Sensitivity to Hyper-parameters.} Ablation studies indicate that performance is sensitive to key hyper-parameters, specifically the expert guidance budget ($M$) and the number of initial expert trajectories ($K$). We observed that an excessively high $M$ can disrupt the stability of self-generated exploration, leading to diminished performance gains. Consequently, deploying \ourapproach across different base model architectures or novel task domains may necessitate additional hyper-parameter optimization to achieve peak efficiency.

\nocite{langley00}

\bibliography{main}
%\bibliographystyle{acl2026}

%%%%%%%%%%%%%%%%%%%%%%%%%%%%%%%%%%%%%%%%%%%%%%%%%%%%%%%%%%%%%%%%%%%%%%%%%%%%%%%
%%%%%%%%%%%%%%%%%%%%%%%%%%%%%%%%%%%%%%%%%%%%%%%%%%%%%%%%%%%%%%%%%%%%%%%%%%%%%%%
% APPENDIX
%%%%%%%%%%%%%%%%%%%%%%%%%%%%%%%%%%%%%%%%%%%%%%%%%%%%%%%%%%%%%%%%%%%%%%%%%%%%%%%
%%%%%%%%%%%%%%%%%%%%%%%%%%%%%%%%%%%%%%%%%%%%%%%%%%%%%%%%%%%%%%%%%%%%%%%%%%%%%%%
\newpage

\appendix

%\onecolumn

\startcontents[appendix]
% 设置局部目录显示到 subsection (2) 或 section (1) 级别
% \setcounter{tocdepth}{2} 
% \printcontents[appendix]{ }{0}{\section*{Appendix}}

% 设置局部目录显示到 subsection (2) 或 section (1) 级别
{
\hypersetup{linkcolor=black}
\printcontents[appendix]{}{0}{\section*
{Appendix}}
}

\newpage
\section{The Use of Large Language Models}
Throughout the preparation of this manuscript, large language models were employed exclusively for light stylistic refinement and the occasional grammatical adjustment. Every conceptual insight, analytical thread, and interpretive conclusion emerged from the authors themselves; no algorithmic assistance was solicited for the framing, design, or substance of the work, and full scientific responsibility rests with the human contributors alone.

\section{Datasets and Benchmarks}
\label{Appendix A: data}
\subsection{Mathematical Reasoning Benchmarks}

\paragraph{1. AIME24} is a dataset in evaluating the mathematical reasoning ability of models. It consists of 30 challenging math problems. All of them are from the American Invitational Mathematics Examination. The problems in the AIME24 dataset cover a wide variety of mathematical fields such as algebraic equations and geometric puzzles. Due to the difficulty characteristics and the richness of question types, it has become a popular benchmark for evaluating the reasoning performance of models, and is widely used in multiple related research experiments.

\paragraph{2. AIME25} consists of 30 challenging math problems. It is directly composed of the real questions from the American Invitational Mathematics Examination newly released in February 2025. AIME25’s knowledge areas are extremely wide. It deeply covers core mathematical sections such as algebra, geometry, number theory, and combinatorial mathematics. This characteristic enables the AIME25 dataset to effectively distinguish the mathematical reasoning abilities of different models.

\paragraph{3.  MATH500~\cite{math500}} is selected by OpenAI from the MATH evaluation dataset. It contains 500 high-difficulty math problems. These problems cover multiple mathematical fields such as algebra, geometry, calculus, and number theory. The difficulty is close to or exceeds the college level. In academic research, MATH500 dataset is often used to evaluate the performance of various reasoning models.

\paragraph{4.  GSM8K~\cite{cobbe2021gsm8k}} is an elementary school math problem dataset released by OpenAI. These problems require 2 to 8 steps to solve, mainly through a series of basic calculations to obtain the final answer. This dataset is primarily used to test the logical and mathematical abilities of models and has been applied in multiple benchmark tests.

\paragraph{5.  AMC23}  is a benchmark dataset designed to evaluate the advanced mathematical reasoning capabilities of large language models (LLMs). This dataset is primarily composed of problems from the 2023 American Mathematics Competitions (AMC 10/12).

\subsection{Knowledge-Intensive Reasoning Benchmarks}

\paragraph{HotPotQA~\cite{yang2018hotpotqa}} is a question-answering dataset for multi-hop. All the documents are sourced from Wikipedia, which provides the dataset with a rich knowledge base and relatively structured information. It is an important benchmark for evaluating the ability of LLMs to understand complex search tasks.

\paragraph{2WikiMultihopQA~\cite{xanh2020_2wikimultihop}} is a dataset specifically designed for the multi-hop questionanswering task. It aims to test and evaluate the ability of natural language processing models to answer questions that require multi-step reasoning and the integration of information from different documents.

\paragraph{Musique~\cite{trivedi2021musique}} is a question-answering dataset specifically designed for the multi-hop question-answering task. Musique aims to be a challenging benchmark for evaluating models’ multi-hop question-answering ability. It promotes the development of models from simple information retrieval to deeper semantic understanding and logical reasoning.

\paragraph{Bamboogle~\cite{Bamboogle}} is a diagnostic dataset designed to test the multi-hop reasoning capabilities of large language models, specifically focusing on questions that popular search engines (such as Google) cannot answer directly with a single search result. It consists of 125 manually curated questions that require ``bridge'' reasoning.

\subsection{Training Datasets}

\paragraph{Tool-Star-SFT-54K~\cite{toolstar}} is a supervised fine-tuning dataset designed for the "cold-start" phase of the Tool-Star framework. It contains 54,000 high-quality samples of tool-integrated reasoning, covering the complete process of autonomous tool selection, parameter invocation, and result parsing. By fine-tuning on this dataset, the model establishes foundational tool-use norms and basic multi-step reasoning capabilities, providing a stable policy base for subsequent reinforcement learning and effectively mitigating issues like reward sparsity and excessive search space during early RL stages.

\paragraph{Tool-Star-RL-10K~\cite{toolstar}} is a training dataset specifically tailored for the "Self-Critic Reinforcement Learning" stage, consisting of approximately 10,000 task samples. This dataset integrates various challenging tasks, supporting end-to-end interactive training in multi-tool environments. Through reinforcement exploration on these 10,000 samples, the model leverages the Self-Critic mechanism to optimize reasoning paths based on real tool feedback, significantly enhancing its logical rigor and self-correction capabilities in complex, long-horizon tasks.

\section{Details of Experiment Setup}
\label{Appendix:training}
\subsection{Supervised Fine-Tuning.}

In supervised fine-tuning phase, we train the model using the Llama Factory framework with a learning rate of $7 \times 10^{-6}$. We employ DeepSpeed ZeRO-3 and FlashAttention2 for optimization. The batch size is set to 128, with a weight decay of 0.1, and the model is trained for 3 epochs. We use BF16 mixed precision with a maximum input length of 4096 tokens.

\subsection{Reinforcement Learning.}

we implement the Reinforcement Learning algorithm based on the VERL framework. As Table~\ref{tab:config}, We have provided configurations for different training methods. All of our experiments were conducted on the 8*A100.

\begin{table*}[h]
    \centering
    % 如果表格过宽，可以使用 resizebox 进行缩放，或者使用 \small 缩小字号
    % \small 
    \begin{tabular}{l c c c}
        \toprule
        Config & Warm Up & Post-RL & Zero-RL \\
        \midrule
        $\gamma$ & 0.5 & 0 & 0 \\
        $\lambda$ & 0.1 & 0 & 0 \\
        n & 8 & 16 & 16 \\
        m & 8 & 0 & 0 \\
        k & 3 & 0 & 0 \\
        optimizer & AdamW & AdamW & AdamW \\
        learning rate & 1e-6 & 1e-6 & 1e-6 \\
        training data & 4000 & 10000 & 10000  \\
        total training steps & 50 & 250 & 250 \\
        training batch size & 128 & 128 & 128 \\
        PPO mini batch size & 16 & 16 & 16 \\
        max response length & 8192 & 8192 & 8192 \\
        max observation length & 512 & 512 & 512 \\
        max tool-calls & 4 & 4 & 4 \\
        retriever & local wiki & local wiki & local wiki \\
        \bottomrule
    \end{tabular}
    \caption{Hyperparameters and configurations for different Training.}
    \label{tab:config}
\end{table*}
\subsection{Self-distillation  Initialization.}

To address the model's difficulty in adhering to the specific \texttt{<python>} invocation protocol, we constructed a small-scale format alignment dataset. We sampled a subset from the SFT data, prompted the model to generate tool-integrated responses, and selected instances containing valid tool calls. We then standardized the format by replacing Markdown code blocks (\texttt{'''python ... '''}) with the target \texttt{<python> ... </python>} tags, resulting in 500 aligned examples. Since this process involved only format substitution without logical correctness verification—and the data originates from self-distillation—this stage has a negligible impact on downstream performance.

\subsection{Training Pipeine.}

\paragraph{\ourapproach{}.} In this setting, the model initially undergoes the warm-up training phase proposed in E3-TIR using the expert trajectory dataset, followed by reinforcement learning on Tool-Star10k using the Post-RL configuration.

\paragraph{SFT-then-RL.} In this setting, the model first undergoes supervised fine-tuning (SFT) on the complete 52k dataset, followed by reinforcement learning on Tool-Star10k using the same Post-RL configuration.

\paragraph{Zero-RL.} In this setting, the model is trained using the Zero-RL configuration. To ensure a fair comparison with E3-TIR and eliminate discrepancies arising from data coverage, we align the training data by including both Tool-Star10k and the expert dataset used in E3-TIR. However, we explicitly exclude the expert trajectory supervision (i.e., using only the prompts) from the expert dataset to strictly maintain the zero-shot exploration setting.

\subsection{Evaluation Setup.}

For evaluation, we extended the $\text{max\_obs\_length}$ to 1k and utilized the F1 score for search tasks, while employing ground-truth matching for mathematical reasoning tasks. The specific configurations are as follows: $\text{top\_p}=0.7$, $\text{temperature}=1.0$, $n=4$
, $\text{do\_sample}=\text{True}$.

\subsection{Reward Score.}

Reward signals serve as the optimization objective and directly guide the behavior of the policy model during training. Formally, the overall reward $R$ is defined as follows:

\begin{equation}
\begin{split}
    R &= 
    \begin{cases} 
        \max(Acc. + r_M, Acc.) & \text{If format \& Acc.} > 0 \\
        r_M & \text{If format \& Acc.} = 0 \\
        -1 & \text{Otherwise}
    \end{cases} \\
    &r_M = 
    \begin{cases} 
        0.1 & \text{If } \exists (\text{valid format}) \\
        0 & \text{Otherwise}
    \end{cases}
\end{split}
\end{equation}

For the mathematical reasoning tasks, we score them based on the degree of match between the model's returned answer and the actual answer. Specifically, we used the math\_verify library for the matching process. For the search task, we use the F1 score.

\subsection{Tool Setup.}

\paragraph{Search Engine.}  we use Wikipedia as the retrieval corpus and E5  as the retriever for four open-domain QA tasks.

\paragraph{Code Interpreter.} Code snippets generated by the language model are executed in a local environment, returning either the execution result or error messages based on correctness. Our Python interpreter implementation follows the design of ToRA , ensuring secure and accurate execution.

\section{The Algorithm Workflow Of \ourapproach}

In this section, we provide a detailed flowchart of the \ourapproach algorithm in diagram~\ref{alg:e3tir}.

\begin{algorithm*}[!h]
\small
\caption{Training Workflow of \ourapproach (Enhanced Experience Exploitation)}
\label{alg:e3tir}
\begin{algorithmic}[1]
    \normalsize
    \Require Initial Policy $\pi_{\theta}$; Reference Model $\pi_{\text{ref}}$; Expert Dataset $D_{\text{expert}}$; Self-explore Budget $N$; Expert-guided Budget $M$
    \Input Query set $Q$ from $D_{\text{expert}}$
    \State Initialize experience pools $\mathcal{D}_{\text{self}} \gets \varnothing, \mathcal{D}_{\text{exp}} \gets \varnothing$
    
    \For{each training iteration $i = 1 \dots K$}
        \algcomment{Phase 1: Expert Experience-Based Rollout}
        \For{each query $q \in Q$}
            \State \textbf{Step 1.1: Anchor Selection}
            \State Calculate step-wise entropy $h_t$ on expert trajectory $\mathcal{H}^*$
            \State Select top-$k$ high-entropy prefixes as Anchors $\mathcal{T}_{\text{anchor}}$
            
            \State \textbf{Step 1.2: Hybrid Sampling}
            \State Sample $N$ trajectories from scratch: $\mathcal{D}_{\text{self}} \gets \mathcal{D}_{\text{self}} \cup \text{Rollout}(\pi_{\theta}, q, N)$
            \For{each anchor $t \in \mathcal{T}_{\text{anchor}}$}
                \State Calculate exploration probability $P_t \propto (h_t - h_{t-1})$
                \If{$P_t > \gamma$ and $|\mathcal{D}_{\text{exp}}| < M$}
                    \State Branch sample from prefix $\mathcal{H}^*_{<t}$: $\mathcal{D}_{\text{exp}} \gets \mathcal{D}_{\text{exp}} \cup \text{Rollout}(\pi_{\theta}, \mathcal{H}^*_{<t})$
                \EndIf
            \EndFor
        \EndFor

        \algcomment{Phase 2: Dynamic Experience Filtering}
        \State Discard expert trajectories with zero reward variance: $\mathbb{V}[r] = 0$
        \State When the variance is 0, only one trajectory is retained.
        \If{$\max(r(\mathcal{D}_{\text{self}})) \ge \max(r(\mathcal{D}_{\text{exp}}))$}
            \State $\mathcal{D}_{\text{train}} \gets \mathcal{D}_{\text{self}}$ \Comment{Self-exploration is sufficient}
        \Else
            \State $\mathcal{D}_{\text{train}} \gets \mathcal{D}_{\text{self}} \cup \mathcal{D}_{\text{exp}}$ \Comment{Need expert guidance}
        \EndIf

        \algcomment{Phase 3: Mix Policy Optimization}
        \For{each trajectory $\tau \in \mathcal{D}_{\text{train}}$}
            \State Calculate Global Advantage: $\hat{A}_{\text{global}} \gets (r(\tau) - \mu_{\text{global}}) / \sigma_{\text{global}}$
            \If{$\tau \in \mathcal{D}_{\text{exp}}$}
                \State Calculate Tree Advantage: $\hat{A}_{\text{tree}} \gets (r(\tau) - \mu_{\text{tree}}) / \sigma_{\text{tree}}$
                \State Final Advantage $\hat{A} \gets \hat{A}_{\text{global}} + \hat{A}_{\text{tree}}$
            \Else
                \State Final Advantage $\hat{A} \gets \hat{A}_{\text{global}}$
            \EndIf
            
            \State \textbf{Advantage-Aware Gradient Detach:}
            \If{$\tau$ shares prefix with expert AND $\hat{A} < 0$}
                 \State Detach gradients for the shared prefix steps (Block optimization on prefix)
            \EndIf
        \EndFor
        
        \State Update $\pi_{\theta}$ by maximizing $\mathcal{J}_{\text{Hybrid}}$ with off-policy correction
    \EndFor
    \State \Return Optimized Policy $\pi_{\theta}$
\end{algorithmic}
\end{algorithm*}

\section{Theoretical Analysis}
\label{sec:appendix_theory}

In this section, we provide a theoretical justification for the effectiveness of the $E^3$-TIR framework. Unlike token-level generation tasks, Tool-Integrated Reasoning (TIR) is modeled as a discrete decision-making process over a sequence of \textbf{nodes} (interaction turns). We analyze how Expert-Guided Branching mitigates the exploration complexity in this multi-turn setting.

\subsection{Problem Formulation}

Following the definitions in Section 2.1, we consider the reasoning process as a trajectory of nodes.

\begin{definition}[Node and Trajectory]
Let a reasoning task utilize a maximum of $T$ interaction turns. A trajectory is a sequence of nodes $\mathcal{H} = \{u_1, u_2, ..., u_T\}$, where each node $u_t = (\tau_t, a_t, o_t)$ represents the triplet of thought, action, and observation at turn $t$. The policy $\pi_{\theta}$ generates the next node conditioned on the history of previous nodes: $u_t \sim \pi_{\theta}(\cdot | \mathcal{H}_{<t})$.
\end{definition}

\begin{assumption}[Step-wise Difficulty]
For a complex reasoning task, obtaining a successful outcome requires a sequence of correct decisions. We assume the ``student'' policy $\pi_{\theta}$ (before alignment) has a bounded probability $p \ll 1$ of independently generating the optimal next node $u^*_t$ given the correct history $\mathcal{H}^*_{<t}$:
\begin{equation}
    P(u_t = u^*_t | \mathcal{H}^*_{<t}, \pi_{\theta}) \le p
\end{equation}
Here, $p$ represents the \textbf{turn-level} success rate, not token-level probability.
\end{assumption}

\subsection{Failure Analysis of Zero-RL}
\label{subsec:theory_zero_rl}

In the Zero-RL setting (or pure Self-Exploration), the agent must successfully navigate from the initial state $u_0$ to a terminal success state within $T$ turns.

\begin{theorem}[Curse of Horizon in Zero-RL]
For a task requiring $T$ critical reasoning turns, the probability of a randomly initialized policy $\pi_{\theta}$ generating a fully correct trajectory $\mathcal{H}_{succ}$ is exponentially small with respect to the number of turns $T$.
\end{theorem}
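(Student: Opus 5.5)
The plan is to apply the chain rule of probability over the node sequence and then invoke the Step-wise Difficulty assumption at each turn. First I pin down what ``fully correct trajectory'' means. Since the task requires $T$ critical reasoning turns, I model success as the event that the sampled trajectory coincides with the correct sequence of critical nodes, i.e.\ $u_t = u^*_t$ for every $t=1,\dots,T$.

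More generally, one could allow a set of acceptable nodes at each turn. In that case the assumption should be read as bounding the probability of landing in that set. I will state this reading explicitly, because the assumption as written only refers to a single optimal node $u^*_t$.

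Under this formulation, the probability of success factorizes as
\begin{equation}
P(\mathcal{H}_{succ} \mid \pi_\theta) = \prod_{t=1}^{T} P\bigl(u_t = u^*_t \mid u_1=u^*_1,\dots,u_{t-1}=u^*_{t-1}, \pi_\theta\bigr) = \prod_{t=1}^{T} P\bigl(u_t = u^*_t \mid \mathcal{H}^*_{<t}, \pi_\theta\bigr).
\end{equation}
The key observation is that the conditioning event in each factor is exactly the correct history $\mathcal{H}^*_{<t}$, because every preceding node has already been forced to be correct. This is precisely the situation the assumption addresses, so each factor is at most $p$. Multiplying the $T$ bounds gives $P(\mathcal{H}_{succ}) \le p^{T} = \exp(-T\log(1/p))$. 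Since $p \ll 1$, we have $\log(1/p) > 0$, so this bound is exponentially small in $T$.

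I would then add the practical consequence. With $N$ independent rollouts per query, the probability that at least one succeeds is at most $1-(1-p^T)^N \le N p^T$. Hence $\Omega(p^{-T})$ rollouts are needed before a nonzero reward signal appears, which matches the inefficient-exploration observation from the introduction.

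The main obstacle is the modeling gap between the assumption and the theorem. The assumption controls only single-step correctness along the expert history. The theorem, however, speaks of a ``randomly initialized'' policy reaching any successful trajectory, and successful trajectories might not pass through $\mathcal{H}^*$. I would first resolve this by defining the $T$ critical turns as necessary conditions: any successful trajectory must realize the correct critical node at each of those turns. If several correct nodes are possible, I let $u^*_t$ denote the set of acceptable nodes and apply the assumption to that set. The observation components $o_t$ are generated by the environment rather than by $\pi_\theta$. I would handle them by noting that the bound $p$ is stated for the full node, so it absorbs the environment's (typically deterministic) transition, and this requires no extra argument.
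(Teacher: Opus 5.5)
Your proposal is correct and follows the paper's argument: you factor the success probability into per-turn terms conditioned on the correct history $\mathcal{H}^*_{<t}$, bound each term by $p$ using the Step-wise Difficulty assumption, and conclude $P(\mathcal{H}_{succ}) \le p^T$. Your chain-rule factorization is slightly cleaner than the paper's proof, which invokes an independence assumption that becomes unnecessary once each factor is conditioned on the correct prefix.
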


\begin{proof}
The probability of generating a complete successful trajectory is the product of the success probabilities at each node. Assuming independence for the lower bound analysis:
\begin{equation}
    P(\text{Success}|\text{Zero-RL}) = \prod_{t=1}^{T} P(u^*_t | \mathcal{H}^*_{<t}, \pi_{\theta}) \le p^T
\end{equation}
Since $p < 1$, as the required turns $T$ increase, $p^T$ decays exponentially. For instance, if an agent has a 50\% chance of making a correct tool call at each step ($p=0.5$) and the task requires 10 steps ($T=10$), the success rate is $0.5^{10} \approx 0.0009$. This explains the ``Inefficient Exploration'' (Figure 1) where the agent fails to collect positive feedback during the cold-start phase.
\end{proof}

\subsection{Success Guarantee of $E^3$-TIR}
\label{subsec:theory_e3tir}

$E^3$-TIR addresses this challenge by utilizing \textbf{Expert-Guided Branch Sampling} (Section 4.1.1). We select an expert prefix containing $k$ nodes, denoted as $\mathcal{H}^*_{<k}$ (where $k < T$), and perform $G$ independent branch rollouts from this anchor node.

\begin{theorem}[Efficiency of Node-level Branching]
Let the agent branch from an expert anchor node at turn $k$. The effective horizon is reduced to $T_{eff} = T - k$ turns. With $G$ parallel branches, the probability of discovering a successful path is significantly amplified.
\end{theorem}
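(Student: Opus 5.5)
The plan mirrors the argument used for the Curse of Horizon theorem, with two changes: a shorter horizon, and then a union over independent branches. I make the implicit modeling choices explicit. The expert prefix $\mathcal{H}^*_{<k}$ is assumed correct, meaning it coincides with the first $k$ nodes of some successful trajectory. Conditioned on this anchor, the policy must generate only the remaining $T-k$ nodes. Rather than using only the upper bound $p$ from the Step-wise Difficulty assumption, I introduce a per-turn success probability $q$ with $0<q\le p$, representing the actual chance that $\pi_\theta$ produces $u_t^*$ given $\mathcal{H}^*_{<t}$. A lower bound is needed here, since the claim is that success becomes \emph{more} likely.

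\textbf{Step 1: single branch.} Factorizing exactly as in the Zero-RL proof, under the same independence assumption, a single branch succeeds with probability
\begin{equation}
P_1 \;=\; \prod_{t=k+1}^{T} P(u_t^*\mid \mathcal{H}^*_{<t},\pi_\theta)\;\ge\; q^{T-k}.
\end{equation}
Compared with the $q^{T}$ achievable from scratch, this is an amplification by the factor $q^{-k}$, which is exponential in the anchor depth $k$. This is the precise sense in which the effective horizon becomes $T_{eff}=T-k$.

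\textbf{Step 2: $G$ branches.} The $G$ rollouts from the anchor are sampled independently given the prefix, so
\begin{equation}
P(\text{at least one success}) \;=\; 1-(1-P_1)^G \;\ge\; 1-\bigl(1-q^{T-k}\bigr)^G \;\ge\; 1-\exp\!\bigl(-G\,q^{T-k}\bigr).
\end{equation}
For the comparison, I apply the same formula to $G$ from-scratch rollouts, giving $1-(1-q^{T})^G$. Since $1-(1-x)^G$ is increasing in $x$ and $q^{T-k}\ge q^{T}$, the branched version dominates. In the regime $Gq^{T}\ll 1$, the from-scratch success probability is approximately $Gq^{T}$. By contrast, taking $G\gtrsim q^{-(T-k)}$ drives the branched probability to a constant. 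So the budget needed for a positive signal falls from order $q^{-T}$ to order $q^{-(T-k)}$.

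\textbf{Main obstacle.} The difficulty is that the statement's phrase ``significantly amplified'' is informal, and a rigorous version needs a lower bound on per-turn success. The assumption as stated provides only the upper bound $p$. I would therefore state the result under an added two-sided assumption $q\le P(u_t^*\mid\mathcal{H}^*_{<t})\le p$, together with the same turn-independence idealization used in the preceding theorem. I would also note explicitly that the anchor must lie on a successful path, which is justified by the filtering step requiring expert trajectories to have higher reward. A secondary remark would observe that top-$K$ high-entropy anchor selection only affects which $k$ is used, and that deeper anchors (larger $k$) yield larger gains.
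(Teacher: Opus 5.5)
Your proposal follows the paper's proof essentially step for step: the paper takes the single-branch success probability over the shortened horizon to be $p^{T-k}$ and then obtains $1-(1-p^{T-k})^G$ for $G$ independent branches, and your lower bound $q$, the exponential estimate and the equal-budget comparison with from-scratch rollouts are refinements that fix the paper's use of the upper bound $p$ as though it were an equality. One small tightening: under your two-sided assumption, comparing $q^{T-k}$ with $q^{T}$ only compares lower bounds, so the claim that branching dominates from-scratch sampling should instead come from the chain rule, since the from-scratch probability of the optimal path equals the branched one times the prefix factor $\prod_{t\le k}P(u_t^*\mid\mathcal{H}^*_{<t})\le 1$.
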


\begin{proof}
Given the expert history of $k$ nodes, the agent only needs to complete the remaining $T-k$ turns correctly.

1. \textbf{Single Branch Probability:} The probability of a single rollout successfully completing the remaining turns is:
\begin{equation}
    P_{single} = p^{T-k}
\end{equation}
Crucially, since we branch at deep nodes (e.g., $k \approx T/2$ or $k \approx 3T/4$), the remaining horizon $T-k$ is small, implying $p^{T-k} \gg p^T$.

2. \textbf{Group Success Probability:} With $G$ independent branches sampled from the policy $\pi_{\theta}(\cdot | \mathcal{H}^*_{<k})$, the probability that \textit{at least one} branch reaches the solution is:
\begin{equation}
    P(\text{Success}|E^3\text{-TIR}) = 1 - (1 - p^{T-k})^G
\end{equation}
\end{proof}

\paragraph{Mitigating Low-Entropy Collapse.}
Traditional SFT+RL often collapses into repetitive, simple patterns (Fast Mode) to minimize turn length. By forcing the agent to branch from expert nodes $\mathcal{H}^*_{<k}$—which are by definition diverse and high-quality—we enforce exploration in the valid reasoning subspace. The condition for $E^3$-TIR to outperform Zero-RL is simply $(1 - p^{T-k})^G > p^T$, which holds true for almost any $k > 0$ and $G \ge 1$.

\subsection{Numerical Illustration}
Consider a complex mathematical reasoning task requiring $T=8$ tool-use turns. Let the base model's turn-level correctness probability be $p=0.4$.
\begin{itemize}
    \item \textbf{Zero-RL:} Success probability $\approx 0.4^8 \approx 0.00065$ (approx. 1 in 1500). The reward signal is extremely sparse.
    \item \textbf{$E^3$-TIR:} We use an expert prefix of $k=6$ turns (leaving 2 turns to solve) and sample $G=8$ branches.
    \begin{itemize}
        \item Success rate per branch: $0.4^2 = 0.16$.
        \item Total success probability: $1 - (1 - 0.16)^8 \approx 1 - 0.25 \approx 75\%$.
    \end{itemize}
\end{itemize}
This dramatic increase from $<0.1\%$ to $75\%$ demonstrates why anchoring at deep nodes allows $E^3$-TIR to achieve ``Fast Convergence'' and break the exploration bottleneck.

\section{Supplement Experimental Results}
\label{Appendix:supple}
\subsection{Statistic Analyse of Problem in Current Training Paradigms.}

\paragraph{Zero-RL Exploration Efficiency and Mode Analysis.} We analyzed the behavioral distribution of Zero-RL during Tool-Integrated Reasoning (TIR) tasks:

\begin{itemize}
    \item \textbf{Tool Call Failure Statistics (Fig.~\ref{fig:motivation} (left,a)):} We tracked the ratio of valid tool calls and the distribution of consecutive code errors during the initial training phase (first 40 steps). A trajectory is considered valid only if it contains code blocks that are both syntactically correct and executable by the interpreter.
    
    \item \textbf{Reasoning Mode Degradation (Fig.~\ref{fig:motivation} (left,b)):} We monitored the prevalence of the direct action mode (i.e., invoking tools without intermediate reasoning). We utilize the React-Mode Ratio (RMR) as a metric, defined as:
    \begin{equation}
        \text{RMR} = 10000*\frac{N_{\text{tool}}}{L_{\text{text}}} * L_{\text{avg}}
    \end{equation}
    where $N_{\text{tool}}$ denotes the total count of tool invocations,  $L_{\text{text}}$ is the average response length and $L_{\text{text}}$ represents the token length of the response excluding tool generation segments. A higher RMR value implies that the agent tends to execute actions directly with minimal intermediate reasoning (Chain-of-Thought), indicating a degradation in reasoning capability.
\end{itemize}

\paragraph{SFT+RL Policy Diversity and Entropy Analysis.} To investigate the phenomenon where SFT+RL starts strong but subsequently stagnates or collapses, we conducted the following analyses:

\begin{itemize}
    \item \textbf{Low Entropy Collapse (Fig.~\ref{fig:motivation} (right,a)):} We tracked the policy entropy $h_t$ at each RL step. Results show a sharp decline in entropy as training progresses, indicating that the model collapses into repetitive, simplistic modes and loses the capacity to explore complex reasoning paths.
    
    \item \textbf{Capability Rigidity and Distribution Shift (Fig.~\ref{fig:motivation} (right,b)):} We visualized the representation of rollout sampling by embedding generated thought paths using Qwen-Embedding, followed by PCA dimensionality reduction and DBSCAN clustering. Comparing the trajectory diversity of Zero-RL and SFT models at the same training step, the results clearly show that the number of sample clusters for the SFT model decreases significantly in later stages.  
\end{itemize}

\subsection{Comparison of Expert Prefixes and Self-Exploration Sampling.}

To investigate the variations in reward standard deviation and entropy under different sampling paradigms, we conducted two distinct RL training sessions using Qwen2.5-3B-Instruct, with hyper parameters fixed at $N=8$ and $M=8$. The experiments differed in the source of the $M$ branches: one utilized expert prefixes, while the other relied on self-generated trajectories. As this study serves as a preliminary analysis, we limited training to 50 steps and recorded: 

(1) The standard deviation of rewards within trajectory trees sharing a common prefix.

(2) The response entropy arising from branching at different turns, calculated specifically over the first 20 generated tokens.

\subsection{Warm-up Stage Cost and Efficiency Analysis}
\label{app:cost}

To quantitatively evaluate the trade-off between pre-training costs and the subsequent reinforcement learning (RL) efficiency, we introduce a comprehensive metric, the \textbf{Efficient Cost-Benefit Ratio ($\text{ROI}$)}. This metric assesses how the \ourapproach paradigm, utilizing varying scales of expert data and warm-up durations, accelerates the convergence of the Post-RL stage while maintaining or enhancing final performance.

We formulate the training process into two distinct stages: the Warm-up stage (utilizing \ourapproach) and the Post-RL stage. The relevant variables are defined as follows:

\begin{itemize}
    \item $X$: The training duration (or steps) allocated to the Warm-up stage.
    \item $Y$: The size of expert trajectories utilized during the Warm-up stage.
    \item $Z$: The training duration (or steps) required to reach convergence in the Post-RL stage.
    \item $Z_{\text{base}}$: The baseline time required for the model to reach convergence without any warm-up (i.e., Zero-RL setting).
    \item $V$: The performance improvement coefficient, defined as the ratio of the final performance of the proposed method to the Zero-RL baseline (e.g., $V=1.12$ implies a 12\% improvement).
\end{itemize}

The $\text{ROI}$ is designed to reward methods that achieve higher performance gains ($V$) and faster total convergence ($Z + \alpha X$) while minimizing reliance on extensive expert data ($Y$). The formula is defined as:

\begin{equation}
    \text{ROI} =\underbrace{V}_{\text{Performance}} \times \underbrace{\frac{Z_{\text{base}}}{Z + \alpha \cdot X}}_{\text{Time Efficiency}} \times \underbrace{1-\frac{Y}{Y_{\text{ref}}}}_{\text{Data Efficiency}}
\end{equation}

where:
\begin{itemize}
    \item $\alpha$ represents the relative computational cost coefficient between the Warm-up stage and the Post-RL stage. Since the Warm-up stage (typically use Expert Prefixes) is computationally less intensive per step than RL (e.g., GRPO). Based on the update statistics of each step during the training process, we set $\alpha =0.62$ to normalize the warm-up cost into equivalent RL units.
    \item $Y_{\text{ref}}$ is the reference data scale constant (set to $52k$ in our experiments). This term normalizes the data efficiency, penalizing the excessive use of expert trajectories.
    \item  \text{Norm} represents the operation of mapping this set of data to the range of 0 to 1.
\end{itemize}

\paragraph{Interpretation.} A higher $\text{ROI}$ indicates that the E3-TIR paradigm successfully leverages a small amount of expert data and a short warm-up phase to significantly reduce the expensive Post-RL training time ($Z \ll Z_{\text{base}}$), thereby achieving a superior overall cost-benefit trade-off.

\begin{table}[!h]
    \centering
    \fontsize{11pt}{14pt}\selectfont % 在外部统一设置字号
    % 第一个表格
    \begin{minipage}{0.48\textwidth}
        \centering
        % 移除 \resizebox，直接写 tabular
        \begin{tabular}{l|cccc}
        \toprule
        \textbf{Data Size} & \multicolumn{4}{c}{\textbf{Expert Trajectories}} \\
        \cmidrule(lr){2-5}
        \textbf{Warm up} & 2k & 4k & 8k & 10k \\
        \midrule
        10 steps & 43.0 & 43.4 & 42.7 & 42.9 \\
        30 steps & 44.3 & 44.6 & 43.8 & 44.5 \\
        50 steps & 43.7 & 46.7 & 46.3 & 46.0 \\
        70 steps & 44.3 & 45.6 & 45.5 & 46.1 \\
        \bottomrule
        \end{tabular}
        \caption{Model performance under different expert trajectories size and Warm up steps on Qwen2.5-3B-Instruct.}
        \label{tab:data1}
    \end{minipage}
    \hfill 
    % 第二个表格
    \begin{minipage}{0.48\textwidth}
        \centering
        \begin{tabular}{l|cccc}
        \toprule
        \textbf{Data Size} & \multicolumn{4}{c}{\textbf{Expert Trajectories}} \\
        \cmidrule(lr){2-5}
        \textbf{Warm up} & 2k & 4k & 8k & 10k \\
        \midrule
        10 steps & 200 & 210 & 190 & 200 \\
        30 steps & 180 & 170 & 160 & 180 \\
        50 steps & 140 & 120 & 120 & 130 \\
        70 steps & 110 & 100 & 100 & 110 \\
        \bottomrule
        \end{tabular}
        \caption{Post-RL convergence steps under different expert trajectories size and Warm up steps on Qwen2.5-3B-Instruct.}
        \label{tab:data2} % 注意修改 label 避免重复
    \end{minipage}
\end{table}

\subsection{Comparison with Offline Fusion Baselines}
\label{app:mix}

We benchmark the mixed strategy of E3-TIR against other policy fusion paradigms by comparing it with two state-of-the-art baselines:

\begin{itemize}
    \item \textbf{Luffy~\cite{luffy}:} This method formulates a dynamically adjustable objective that balances ``teacher imitation'' and ``self-exploration.'' Utilizing Group Relative Policy Optimization (GRPO), Luffy directly incorporates high-quality off-policy reasoning trajectories generated by a strong teacher (DeepSeek-R1) into the training batch, computing advantages in conjunction with the model's own on-policy trajectories.

    \item \textbf{HPT~\cite{hpt}:} This approach implements a real-time gating mechanism based on the rollout accuracy $P$ for each specific query. It dynamically toggles between learning signals using a threshold $\gamma$: applying pure SFT with demonstrations when performance is low ($P \le \gamma$), and switching to pure GRPO for exploration when performance is high ($P > \gamma$). HPT achieves a minimalist fusion—characterized as ``one formula, two signals, automatic switching''—without requiring manual weight tuning or auxiliary models.
\end{itemize}

\subsection{Expert Trajectory Analysis during the Warm-up stage}

We analyze the evolution of the mixed strategy during the warm-up phase, focusing on sampling ratios and relative rewards. Figure~\ref{fig:app1} (left) illustrates a decline in expert-guided samples concurrent with a rise in fully on-policy generation as training progresses. This trend reflects a reduced reliance on expert trajectories, validating \ourapproach's two-stage training design. Additionally, Figure~\ref{fig:app1} (right) demonstrates that the mixed strategy consistently yields higher rewards than the on-policy approach, indicating that expert prefixes effectively enhance the probability of generating optimal trajectories.

\begin{figure*}[!h]
	\centering
	\includegraphics[width=1.4\columnwidth]{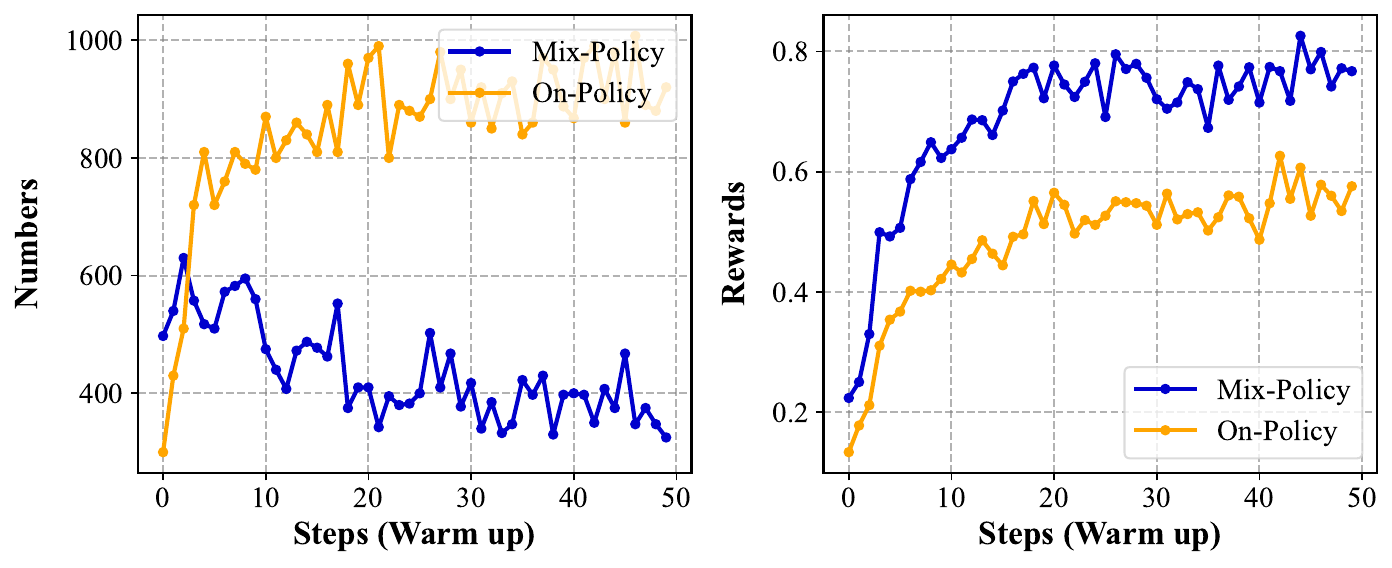}
    	\caption{Variation of the sample size and rewards in the preheating stage of the mixed strategy.}
        \label{fig:app1}
\end{figure*}

\section{Instruction Template}
\label{Appendix:template}

\subsection{Instructions for Mathematical Reasoning Evaluation}

\begin{tcolorbox}[
    colframe=gray,       % Border color
    colback=gray!5!white,             % Background color
    coltitle=white,                   % Title font color,
    coltext=black,                    % Text color,
    fonttitle=\bfseries,              % Bold title font,
    title=Instruction for Sampling and Reasoning Computational Datasets,  % Title content,
    boxrule=1pt,                      % Border width,
    arc=2mm,                          % Rounded corners,
    width=\linewidth,                 % Width,
    left=7pt,                         % Left margin,
    right=7pt,                        % Right margin,
    top=5pt,                          % Top margin,
    bottom=5pt                        % Bottom margin
]
\fontsize{10pt}{10pt}\selectfont
    You are a helpful assistant that can solve the given question step by step with the help of the python interpreter tool. 
    Given a question, you need to first think about the reasoning process in the mind and then provide the answer. 
    During thinking, you can invoke the python interpreter tool to calculate the math problem for fact information about specific topics if needed. 
  
    The reasoning process and answer are enclosed within <think> </think> and <answer> </answer> tags respectively. 
    For example, <think> This is the reasoning process. </think> <python> python code here </python> <result> python interpreter result here </result> 
    <think> This is the reasoning process. </think> <answer> The final answer is  \verb|\|boxed\{answer\ here\} </answer>. 
    In the last part of the answer, the final exact answer is enclosed within boxed{} with latex format. \\\{question\}
\end{tcolorbox}

\subsection{Instructions for Knowledge-Intensive Reasoning Evaluation}

\begin{tcolorbox}[
    colframe=gray,       % Border color
    colback=gray!5!white,             % Background color
    coltitle=white,                   % Title font color,
    coltext=black,                    % Text color,
    fonttitle=\bfseries,              % Bold title font,
    title=Instruction for Sampling and Reasoning Computational Datasets,  % Title content,
    boxrule=1pt,                      % Border width,
    arc=2mm,                          % Rounded corners,
    width=\linewidth,                 % Width,
    left=7pt,                         % Left margin,
    right=7pt,                        % Right margin,
    top=5pt,                          % Top margin,
    bottom=5pt                        % Bottom margin
]
\fontsize{10pt}{10pt}\selectfont
    You are a helpful assistant that can solve the given question step by step with the help of the wikipedia search tool. 
    Given a question, you need to first think about the reasoning process in the mind and then provide the answer. 
    During thinking, you can invoke the wikipedia search tool to search for fact information about specific topics if needed. 

    The reasoning process and answer are enclosed within <think> </think> and <answer> </answer> tags respectively, 
    and the search query and result are enclosed within <search> </search> and <result> </result> tags respectively. 
    For example, <think> This is the reasoning process. </think> <search> search query here </search> <result> search result here </result> 
    <think> This is the reasoning process. </think> <answer> The final answer is \verb|\|boxed\{answer\ here\} </answer>.
     
    In the last part of the answer, the final exact answer is enclosed within boxed{} with latex format. \\\{question\}
\end{tcolorbox}

\subsection{Instructions for Reinforcement Learning Training}

\begin{tcolorbox}[
    colframe=gray,       % Border color
    colback=gray!5!white,             % Background color
    coltitle=white,                   % Title font color,
    coltext=black,                    % Text color,
    fonttitle=\bfseries,              % Bold title font,
    title=Instruction for Sampling and Reasoning Computational Datasets,  % Title content,
    boxrule=1pt,                      % Border width,
    arc=2mm,                          % Rounded corners,
    width=\linewidth,                 % Width,
    left=7pt,                         % Left margin,
    right=7pt,                        % Right margin,
    top=5pt,                          % Top margin,
    bottom=5pt                        % Bottom margin
]
\fontsize{10pt}{10pt}\selectfont
    You are a helpful assistant that can solve the given question step by step with the help of the wikipedia search tool and python interpreter tool. 
    
    Given a question, you need to first think about the reasoning process in the mind and then provide the answer. 
    During thinking, you can invoke the wikipedia search tool to search and python interpreter tool to calculate the math problem for fact information about specific topics if needed. 
    The reasoning process and answer are enclosed within <think> </think> and <answer> </answer> tags respectively, and the search query and result are enclosed within <search> </search> and <result> </result> tags respectively. 
    
    For example, <think> This is the reasoning process. </think> <search> search query here </search> <result> search result here </result> <think> This is the reasoning process. </think> 
    <python> python code here </python> <result> python interpreter result here </result> <think> This is the reasoning process. </think> 
    <answer> The final answer is  \verb|\|boxed\{answer\ here\} </answer>. In the last part of the answer, the final exact answer is enclosed within  boxed{} with latex format.
\end{tcolorbox}

\section{Case Study}
\label{Appendix:case}

\begin{table*}[!t]
    \centering
    \caption{An example from \ourapproach on the Musique dataset, with special symbols used in think content, search queries, returned results and final answer highlighted with \purple{purple box}, \green{green box} and \red{red box}, respectively.}
    \fontsize{10pt}{11pt}\selectfont
    \begin{tabular}{p{0.98\linewidth}}
    \midrule
        \rowcolor{gray!20}\textbf{Example from Musique} \\
    \midrule
        \textbf{Question:} \\
        What country has the border troops of the country of the literature of the country of Karl Maron citizenship?\\
        \textbf{Labeled Answer:} \\
        East Germany \\
    \midrule
        \rowcolor{gray!20}
        \textbf{Model Output by \ourapproach (Qwen2.5-3B)}\\
    \midrule
        \purple{<think>} To solve this question, I need to find out the country whose border troops have the citizenship of the country where the literature of the country of Karl Maron is from.
        \\
        Step 1: Identify the country where Karl Maron is from.\\
        Step 2: Identify the country whose literature is from the country of Step 1.\\
        Step 3: Identify the citizenship of the country from Step 2.\\
        Step 4: Identify the country whose border troops have the citizenship from Step 3.\\

        Step 1: Identify the country where Karl Maron is from.\\

        I'll use a query to search for this information:\purple{</think>}\\\green{<search>} Karl Maron country of citizenship \green{</search>}\\\red{<result>}
        Doc 1 [id: 11966372] Title: Karl Maron
        Karl Maron (27 April 1903 – 2 February 1975) was a German politician, who served as the interior minister of East Germany. He also assumed different posts in East Germany's government...
\\
        Doc 2 [id: 9746062] Title: Karl Maron
        ...Maron was the step-father of author Monika Maron. Karl Maron married her mother in 1955. He died in 1975. Karl Maron Karl Maron (27 April 1903 – 2 February 1975) was a German politician, who served as the interior minister of East Germany...
        \red{</result>} \\\purple{<think>} Step 2: Identify the country whose literature is from the country of Step 1.
\\
        I'll use a query to search for this information:\purple{</think>}\\\green{<search>} country where the literature of East Germany is from \green{</search>}\\\red{<result>}
        Doc 1 [id: 5484532] Title: Literature of East Germany
        Literature of East Germany East German literature is the literature produced in East Germany from the time of the Soviet occupation in 1945 until the end of the communist government in 1990...
\\
        Doc 2 [id: 12666639] Title: Literature of East Germany
        the literature produced in this period among the best of the entire GDR. The 1990s saw the reunification of East and West Germany and the abrupt demise of the dream of a German "socialist utopia."...
        \red{</result>} \\\purple{<think>} Step 3: Identify the citizenship of the country from Step 2.
\\
        I'll use a query to search for this information:\purple{</think>}\\\green{<search>} literature of East Germany country \green{</search>}\\\red{<result>}
        Doc 1 [id: 5484532] Title: Literature of East Germany
        Literature of East Germany East German literature is the literature produced in East Germany...
\\
        Doc 3 [id: 16627150] Title: Culture of East Germany
        Culture of East Germany The culture of East Germany varied throughout the years due to the political and historical events that took place in the 20th century...
        \red{</result>} \\\purple{<think>} This is last step and I need to give the final answer, which is the country whose border troops have the citizenship of the country where the literature of the country of Karl Maron is from.\purple{</think>}\\\red{<answer>}\\The final answer is \boxed{East Germany}\\\red{</answer>}\\
    \bottomrule
    \end{tabular}
\end{table*}

\begin{table*}[!t]
    \centering
    \caption{An example from \ourapproach on the AIME25 dataset, with special symbols used in think content.}
    \fontsize{9pt}{11pt}\selectfont
    \begin{tabular}{p{0.98\linewidth}}
    \midrule
        \rowcolor{gray!20}\textbf{Example from Dataset} \\
    \midrule
        \textbf{Question:} \\
        Find the number of ordered pairs $(x,y)$, where both $x$ and $y$ are integers between $-100$ and $100$ inclusive, such that $12x^2-xy-6y^2=0$.\\
        \textbf{Labeled Answer:} 117\\
    \midrule
        \rowcolor{gray!20}
        \textbf{Model Output by  \ourapproach (Qwen2.5-3B)}\\
    \midrule

        \purple{<think>} To solve the problem, we need to find the number of ordered pairs \((x, y)\) such that both \(x\) and \(y\) are integers between \(-100\) and \(100\) inclusive, and they satisfy the equation \(12x^2 - xy - 6y^2 = 0\).\\
        First, let's rearrange the equation to make it easier to solve:
        \[12x^2 - xy - 6y^2 = 0.\]
        This is a quadratic equation in terms of \(x\). We can use the quadratic formula to solve for \(x\):
        \[12x^2 - xy - 6y^2 = 0.\]
        The quadratic formula for \(ax^2 + bx + c = 0\) is given by:
        \[x = \frac{-b \pm \sqrt{b^2 - 4ac}}{2a}.\]
        In our equation, \(a = 12\), \(b = -y\), and \(c = -6y^2\). So the solutions for \(x\) are:
        \[x = \frac{y \pm \sqrt{y^2 + 24 \cdot 6y^2}}{2 \cdot 12} = \frac{y \pm \sqrt{y^2 + 144y^2}}{24} = \frac{y \pm \sqrt{145y^2}}{24} = \frac{y \pm y\sqrt{145}}{24}.\]
        This gives us two solutions:
        \[x = \frac{y(1 + \sqrt{145})}{24} \quad \text{and} \quad x = \frac{y(1 - \sqrt{145})}{24}.\]
        For \(x\) to be an integer, \(\frac{y(1 + \sqrt{145})}{24}\) and \(\frac{y(1 - \sqrt{145})}{24}\) must be integers. Since \(\sqrt{145}\) is not an integer, \(y\) must be a multiple of 24 for \(\frac{y(1 + \sqrt{145})}{24}\) and \(\frac{y(1 - \sqrt{145})}{24}\) to be integers. Let's denote \(y = 24k\), where \(k\) is an integer. Then the solutions for \(x\) become:
        \[x = \frac{24k(1 + \sqrt{145})}{24} = 1 + \sqrt{145}k \quad \text{and} \quad x = \frac{24k(1 - \sqrt{145})}{24} = 1 - \sqrt{145}k.\]
        Since \(\sqrt{145}\) is not an integer, \(k\) must be such that \(1 + \sqrt{145}k\) and \(1 - \sqrt{145}k\) are integers. This means \(k\) must be an integer that makes \(1 + \sqrt{145}k\) and \(1 - \sqrt{145}k\) integers. The only way this can happen is if \(k = 0\), but \(k = 0\) would make \(x = 1\). However, we need to check if there are any other values of \(k\) that might work.\\
        Let's check the possible values of \(k\) by iterating over the range of \(y\) from \(-100\) to \(100\). We will use Python to check which pairs \((x, y)\) satisfy the equation.\\
        Let's write the Python code to find the number of such pairs \((x, y)\).
        \purple{</think>}\\\blue{<python>}

        \begin{minipage}{\linewidth} % 使用 minipage 防止表格断裂
        \begin{lstlisting}[style=python]
# Define the range for x and y
range_min = -100
range_max = 100

# Initialize a counter for the number of valid pairs
count = 0

# Iterate over all possible values of x and y
for x in range(range_min, range_max + 1):
    for y in range(range_min, range_max + 1):
        if 12*x**2 - x*y - 6*y**2 == 0:
            count += 1

# Output the result
print(count)
        \end{lstlisting}
        \end{minipage}
        
        \blue{</python>}\\\red{<result>}117\red{</result>} \\\purple{<think>}
        
        The number of ordered pairs \((x, y)\), where both \(x\) and \(y\) are integers between \(-100\) and \(100\) inclusive, such that \(12x^2 - xy - 6y^2 = 0\) is \(\boxed{117}\).
        
        \purple{</think>}\\\red{<answer>}The final answer is \boxed{117}\red{</answer>}\\
    \bottomrule
    \end{tabular}
\end{table*}

\begin{table*}[!t]
    \centering
    \caption{An example from \ourapproach on the AIME24 dataset, with special symbols used in think content.}
    \fontsize{9pt}{11pt}\selectfont
    \begin{tabular}{p{0.98\linewidth}}
    \midrule
        \rowcolor{gray!20}\textbf{Example from Dataset} \\
    \midrule
        \textbf{Question:} 
        Let $p$ be the least prime number for which there exists a positive integer $n$ such that $n^{4}+1$ is divisible by $p^{2}$. Find the least positive integer $m$ such that $m^{4}+1$ is divisible by $p^{2}$.\textbf{Labeled Answer:} 110\\
    \midrule
        \rowcolor{gray!20}
        \textbf{Model Output by \ourapproach (Qwen2.5-3B)}\\
    \midrule

        \purple{<think>} To solve this problem, we need to follow these steps:

1. Identify the smallest prime number $p$ for which there exists a positive integer $n$ such that $n^4 + 1$ is divisible by $p^2$.
2. Once we find this prime $p$, we need to find the smallest positive integer $m$ such that $m^4 + 1$ is divisible by $p^2$.

Let's start by iterating over prime numbers and checking the condition for $n^4 + 1$ being divisible by $p^2$.

We will use Python to find the smallest prime $p$ and then the smallest $m$.First, we will write a function to check if a number is prime and then iterate over prime numbers to find the smallest $p$.  Let's write the code to achieve this.
        \purple{</think>} \blue{<python>}

        \begin{minipage}{\linewidth}
        \begin{lstlisting}[style=python]
from sympy import isprime, primerange
# Function to find the smallest prime p
def find_smallest_prime_p():
    p = 2
    while True:
        found = False
        # Check for the first 1000 values of n to find if there exists an n 
        # such that (n^4 + 1) is divisible by p^2
        for n in range(1, 1000):
            if (n**4 + 1) % (p**2) == 0:
                found = True
                break
        if found:
            return p
        p = next_prime(p)

# Find the smallest prime p
smallest_prime_p = find_smallest_prime_p()
print((smallest_prime_p, smallest_m))
        \end{lstlisting}
        \end{minipage}
        \blue{</python>}\red{<result>}Code execution failed: Traceback (most recent call last):
  File ``<string>'', line 26, in <module>
  File ``<string>'', line 15, in find\_smallest\_prime\_p
NameError: name `next\_prime' is not defined\red{</result>} \purple{<think>}
        
        It seems that the previous approach had some issues. Let's solve this problem step-by-step more methodically.

1. **Identify the smallest prime $p$ such that there exists a positive integer $n$ for which $n^4 + 1$ is divisible by $p^2$.**

2. **Once we find $p$, determine the smallest integer $m$ such that $m^4 + 1$ is divisible by $p^2$.**

We will start by iterating over prime numbers and check for each prime $p$ whether there exists an $n$ such that $n^4 + 1 \equiv 0 \pmod{p^2}$. This means $n^4 \equiv -1 \pmod{p^2}$. We will then check for the smallest $m$ that satisfies this condition for $p^2$.

Let's write the corrected and more analytical code to solve this:

1. Generate a list of prime numbers.
2. For each prime $p$, check if there exists an $n$ such that $n^4 \equiv -1 \pmod{p^2}$.
3. Once we find the smallest such $p$, find the smallest $m$ such that $m^4 \equiv -1 \pmod{p^2}$.
        \purple{</think>} \blue{<python>}

        \begin{minipage}{\linewidth}
        \begin{lstlisting}[style=python]
from sympy import isprime, primerange, next_prime
# Function to find the smallest m for a given prime p
def find_smallest_m_for_prime_p(p):
    m = 1
    while True:
        if (pow(m, 4, p**2) + 1) % (p**2) == 0:
            return m
        m += 1
# Iterate over primes
primes = list(primerange(2, 10000))
for p in primes:
    found = False
    # Check for an n such that n^4 + 1 is divisible by p^2
    # The search range for n can be up to p^2
    for n in range(1, 1000): 
        if (pow(n, 4, p**2) + 1) % (p**2) == 0:
            found = True
            break
    if found:
        smallest_prime_p = p
        break
smallest_m = find_smallest_m_for_prime_p(smallest_prime_p)
print((smallest_prime_p, smallest_m))
        \end{lstlisting}
        \end{minipage}
        \blue{</python>}\red{<result>}(17, 110)\red{</result>}\purple{<think>}
        
        The smallest prime number $p$ for which there exists a positive integer $n$ such that $n^4 + 1$ is divisible by $p^2$ is $17$.
The least positive integer $m$ such that $m^4 + 1$ is divisible by $17^2 = 289$ is $110$.Thus, the final answer is $\boxed{110}$.
        \purple{</think>}\\\red{<answer>}The final answer is \boxed{110}\red{</answer>}\\
    \bottomrule
    \end{tabular}
\end{table*}

\end{document}

%% file: tables/compare_to_algothrim.tex
\begin{table*}[t]
\centering
\caption{Overall results on 10 challenging reasoning tasks. The top two results are highlighted in \textbf{bold} and \underline{underlined}. The abbreviations of the datasets: Math (Math500), HQA (HotpotQA), 2Wiki. (2wikiMultiHopQA), MuSi. (MuSiQue), SimQA (SimpleQA) and Bamb (Bamboogle).}
\label{tab:compare_modified}
\setlength\tabcolsep{2.0pt}
\renewcommand{\arraystretch}{1}
\fontsize{8.5pt}{11pt}\selectfont
\begin{tabular}{p{2.8cm}cccccccccccc}
\toprule
\multirow{2}[2]{*}{\textbf{Method}} & \multicolumn{5}{c}{\textbf{Mathematical Reasoning}} & \multicolumn{5}{c}{\textbf{Knowledge-Intensive Reasoning}} & \multirow{2}[2]{*}{\textbf{Avg.}} & \multirow{2}[2]{*}{\textbf{RMR.}} \\
\cmidrule(lr){2-6} \cmidrule(lr){7-11}
& AIME24 & AIME25 & AMC23 & GSM8K & MATH & 2Wiki. & Bamb. & HQA & MuSi. & SimQA & & \\
\midrule
\textbf{Qwen2.5-3B-Instruct} & 4.5 & 8.6 & 38.3 & 65.3 & 81.0 & 19.2 & 27.5 & 11.5 & 6.3 & 5.9 & 26.8 & $-$ \\
\quad + Only SFT & 6.0 & 10.8 & 35.6 & 58.3 & 81.0 & 30.3 & 35.2 & 35.8 & 16.3 & 19.5 & 32.9 & 7.4 \\
\quad + SFT-then-RL & 11.2 & \underline{16.5} & \underline{51.8} & 69.1 & 85.1 & \textbf{54.2} & 47.3 & \underline{50.5} & \underline{25.3} & 30.6 & \underline{44.2} & \textbf{6.1} \\
\quad + Zero-RL & \underline{12.0} & 13.3 & 48.7 & \underline{71.9} & \textbf{85.4} & 46.7 & \textbf{50.8} & 48.8 & 23.2 & \underline{31.1} & 43.2 & 7.9 \\
\rowcolor[RGB]{236,244,252}
\quad + \ourapproach (Ours) & \textbf{19.0} & \textbf{20.8} & \textbf{56.0} & \textbf{72.3} & \underline{85.2} & \underline{52.3} & \underline{50.3} & \textbf{51.4} & \textbf{26.9} & \textbf{32.6} & \textbf{46.7} & \underline{6.2} \\
\midrule
\textbf{Qwen2.5-7B-Instruct} & 7.5 & 9.2 & 50.0 & 74.1 & 89.4 & 18.2 & 29.6 & 15.3 & 7.8 & 6.4 & 30.7 & $-$ \\
\quad + Only SFT & 14.6 & 14.1 & 56.2 & 71.3 & 89.6 & 33.3 & 37.9 & 37.1 & 14.6 & 23.2 & 39.2 & 8.2 \\
\quad + SFT-then-RL & \textbf{25.9} & 23.3 & 59.4 & \underline{77.4} & \underline{91.2} & \textbf{58.4} & \underline{53.6} & \underline{52.3} & \underline{27.0} & 30.7 & \underline{49.9} & \underline{7.7} \\
\quad + Zero-RL & 21.3 & \underline{27.1} & \underline{62.1} & 76.8 & 90.6 & 55.7 & 49.6 & 50.1 & 24.7 & \underline{31.5} & 49.0 & 7.9 \\
\rowcolor[RGB]{236,244,252}
\quad + \ourapproach (Ours) & \underline{23.2} & \textbf{29.3} & \textbf{64.8} & \textbf{80.4} & \textbf{92.1} & \underline{56.3} & \textbf{54.2} & \textbf{56.0} & \textbf{32.2} & \textbf{33.5} & \textbf{52.2} & \textbf{7.5} \\
\midrule
\textbf{Llama3.1-8B-Instruct} & 0 & 1.6 & 25.0 & 46.6 & 78.3 & 16.9 & 38.4 & 12.1 & 13.3 & 6.3 & 23.9 & $-$ \\
\quad + Only SFT & 7.5 & 11.2 & 34.3 & 59.3 & 85.6 & 32.9 & 41.1 & 38.2 & 15.9 & 21.7 & 34.8 & 8.4 \\
\quad + SFT-then-RL & \underline{16.0} & \underline{12.5} & 52.0 & \underline{64.0} & 86.1 & 53.8 & \textbf{55.4} & \underline{53.8} & \textbf{30.2} & \textbf{33.1} & \underline{45.7} & \textbf{7.1} \\
\quad + Zero-RL & \textbf{17.2} & \underline{12.5} & \underline{53.7} & 62.5 & \underline{87.9} & \underline{54.3} & 50.9 & 49.6 & 28.5 & 31.6 & 44.9 & 7.7 \\
\rowcolor[RGB]{236,244,252}
\quad + \ourapproach (Ours) & 15.5 & \textbf{20.0} & \textbf{58.4} & \textbf{65.2} & \textbf{88.2} & \textbf{57.4} & \underline{55.3} & \textbf{55.6} & \underline{29.7} & \underline{32.7} & \textbf{47.8} & \underline{7.4} \\
\bottomrule
\end{tabular}
\end{table*}

%% file: tables/main_table_compare_method.tex
\begin{table*}[!t]
\centering
\caption{Overall results on 10 challenging reasoning tasks. The top two results are highlighted in \textbf{bold} and \underline{underlined}. All baseline methods use Qwen2.5-3B as the backbone. For baselines lacking a 3B Instruct version, we replicate them using the exact parameter settings from their open-source code.}
\label{tab:main_table}
\setlength\tabcolsep{2.5pt}       % 既然少了两列，可以把间距稍微调大一点 (原为1pt)
\renewcommand{\arraystretch}{1}
\fontsize{8.5pt}{11pt}\selectfont
\begin{tabular}{p{3.25cm}ccccccccccc}   % 共 12 列
\toprule
\multirow{2}[2]{*}{\textbf{Method}} &
\multicolumn{5}{c}{\textbf{Computational Reasoning}} &  % 改为5列
\multicolumn{5}{c}{\textbf{Knowledge-Intensive Reasoning}} & % 改为5列
\multirow{2}[2]{*}{\textbf{Avg.}} \\
\cmidrule(lr){2-6} \cmidrule(lr){7-11} % 调整划线范围
 & AIME24 & AIME25 & AMC23 & GSM8K & MATH & 2Wiki. & Bamb. & HQA & MuSi. & SimQA & \\ % 删除了 AMC23 和 PopQA
\midrule
Qwen2.5-3B-Instruct
 & 4.5	& 8.6	&38.3	&65.3	&81.0 &	19.2	&27.5&	11.5&	6.3&	5.9 & 26.8\\
\midrule
\multicolumn{12}{l}{\textit{\textbf{LLM-based Math Agent}}} \\ % 改为12
ToRL
 & 10.0	&20.0	& 53.0	& \underline{72.0}	&84.4&	24.0&	25.4	&17.9&	8.3&	6.5 &	32.2\\
SimpleTIR
 & 17.5& 	19.1& 	\underline{54.3}	& 69.4	& 83.9& 	16.1& 	24.2& 	16.5& 	7.2	& 8.8	& 31.7\\
\midrule
\multicolumn{12}{l}{\textit{\textbf{LLM-based Search Agent}}} \\ % 改为12
Search-o1
 & 8.6	&10.3	&36.5&	58.6&	74.2&	27.4&	38.5	&29.5	&12.5	&10.8	&30.7\\
Search-R1
 & 3.3	 &10.0	 &31.2 &	58.0 &	60.5	 &44.9	 &44.2	 &50.1	 &22.2	 &\underline{31.4}	 &35.6\\
Tree-GRPO
 & 9.2	& 16.6& 	32.5& 	52.3& 	56.5& 	44.1& 	43.7	& 50.5	& 26.7	& 29.0 &36.1\\
\midrule
\multicolumn{12}{l}{\textit{\textbf{Multi-Tool Integrated Agent}}} \\ % 改为12
ReCall
 & 7.6	&12.4	&48.2	&68.6&	84.1&	42.6&	43.7	&48.7	&22.6	&26.5&	40.5\\
Tool-Star
 & \textbf{20.1}	&12.9	& 52.0	&69.8& 85.0 &	48.6	&44.1&	\textbf{52.4} &	\textbf{27.8} &	30.3 &	44.3\\
ARPO
 &  16.2	& \underline{20.6} &	52.0	&70.2&	\textbf{85.7}&	\textbf{54.3} &	\underline{47.6}	&50.0	&26.5	&29.3&	\underline{45.2}\\
\rowcolor[RGB]{236,244,252}
\ourapproach (Ours)
 & \underline{19.0} &	\textbf{20.8} &	\textbf{56.0}	 & \textbf{72.3} & \underline{85.2} &	\underline{52.3} &	\textbf{50.3}	 & \underline{51.4} &	\underline{26.9}	 &\textbf{32.6} & \textbf{46.7}\\
\bottomrule
\end{tabular}
\end{table*}